\documentclass{article}

\usepackage[preprint]{neurips_2026}

\usepackage[utf8]{inputenc} % allow utf-8 input
\usepackage[T1]{fontenc}    % use 8-bit T1 fonts
\usepackage{hyperref}       % hyperlinks
\usepackage{url}            % simple URL typesetting
\usepackage{booktabs}       % professional-quality tables
\usepackage{amsfonts}       % blackboard math symbols
\usepackage{nicefrac}       % compact symbols for 1/2, etc.
\usepackage{microtype}      % microtypography
\usepackage{xcolor}         % colors
\usepackage[most]{tcolorbox}

\hypersetup{
    hidelinks,        
    colorlinks=true,  
    citecolor=blue,   
    linkcolor=blue   
}
\usepackage{amsmath}
\usepackage{floatflt}
\usepackage{bm}
\usepackage{graphicx}
\usepackage{colortbl} 
\usepackage{enumitem} 
\usepackage{wrapfig}
\usepackage{listings}
\usepackage{algorithmic}  
\usepackage{multirow} 
\usepackage{amssymb}
\usepackage{pifont} 
\usepackage[most]{tcolorbox}
\usepackage{newfloat}
\usepackage{listings}
\usepackage{caption}
\usepackage{enumitem}
\usepackage[ruled,vlined]{algorithm2e}

\definecolor{morandiblue}{RGB}{145,168,176}
\definecolor{morandired}{RGB}{196,150,148}
\definecolor{mydarkred}{RGB}{170,30,30} 
\definecolor{mydarkblue}{RGB}{30,30,170} 

\usepackage{arydshln} 
\newtheorem{theorem}{Theorem}[section]

\usepackage{bbm}

\title{APPO: Agentic Procedural Policy Optimization}

\author{%
  Xucong Wang$^{1,2}$\thanks{\parbox[t]{0.85\linewidth}{%
    Equal Contribution. Work done during Xucong's internship at AMAP, Alibaba Group. \\ \textsuperscript{\textdagger}Project lead: Yong Wang; Corresponding authors: Yong Wang and Pengkun Wang}}\quad 
  Ziyu Ma$^{2*}$\quad 
  Yong Wang$^{2\dagger}$\quad 
  Yuxiang Ji$^{2}$\quad 
  Shidong Yang$^{2}$\quad \\ 
  \textbf{Guanhua Chen$^{3}$\quad   
  Pengkun Wang$^{1\dagger}$\quad 
  Xiangxiang Chu$^{2}$}\\
  $^1$USTC, $^2$AMAP, Alibaba Group, $^3$SUSTech
}

\begin{document}

\maketitle

\begin{abstract}
Recent advances in agentic Reinforcement Learning (RL) have substantially improved the multi-turn tool-use capabilities of large language model agents. However, most existing methods assign credit over coarse heuristic units, such as tool-call boundaries or fixed workflows, making it difficult to identify which intermediate decisions influence downstream outcomes. In this work, we study agentic RL from two perspectives: \textit{where to branch and how to assign credit after branching}. Our pilot analysis shows that influential decision points are broadly distributed throughout the generated sequence rather than concentrated at tool calls, while token entropy alone does not reliably reflect their impact on final outcomes. Motivated by these observations, we propose \textbf{Agentic Procedural Policy Optimization (APPO)}, which shifts branching and credit assignment from coarse interaction units to fine-grained decision points in the sequence. APPO selects branching locations using a Branching Score that combines token uncertainty with policy-induced likelihood gains of subsequent continuations, enabling more targeted exploration while filtering out spurious high-entropy positions. It further introduces procedure-level advantage scaling to better distribute credit across branched rollouts. Experiments on 13 benchmarks show that APPO consistently improves strong agentic RL baselines by nearly 4 points, while keeping efficient tool-calls and maintaining behavior interpretability. Project Page: \href{https://github.com/AMAP-ML/APPO}{\textbf{Github}}. 
\end{abstract}

\section{Introduction}
Large language models (LLMs)~\cite{wei2022chain,jaech2024openai,team2025kimi1,team2025kimi,team2025longcat,yang2025qwen3,li2025system} have evolved from static text generators~\cite{radford2019language} into autonomous agents~\cite{wang2025toward,zhang2025landscape,dong2025agentic,feng2025retool,ma2026skillclaw} capable of multi-turn interaction with external environments, enabling strong performance in long-horizon~\cite{lu2025pilotrl,gao2025beyond} real-world tasks~\cite{hafner2021benchmarking,cao2025skyrl,shridhar2020alfworld,yao2022webshop}. 
This progress is largely driven by Reinforcement Learning with Verifiable Rewards (RLVR), which enables policy optimization using sparse outcome-level supervision~\cite{guo2025deepseek,schulman2017proximal,lee2023rlaif,rafailov2023direct,shao2024deepseekmath,chu2026gpg}. 
However, this training paradigm introduces a fundamental limitation: feedback is only provided at the trajectory level, making it difficult to attribute success or failure to specific intermediate decisions. As a result, each trajectory provides only a coarse and entangled learning signal, leading to inefficient credit assignment and unstable policy improvement~\cite{qian2025smart,feng2025group,hou2025treerl,ji2025tree}.

To address this limitation, existing approaches restructure agent rollouts to extract more informative credit signals under limited rollout budgets. A common strategy is to expand trajectories from intermediate locations, constructing multiple candidate branches and assigning credit according to outcome differences across rollouts~\cite{yao2022react,zhou2023language,ji2025tree,zhao2026training,shen2025carl}. The intuition is that, instead of repeatedly sampling full trajectories with substantial redundancy, one can branch around uncertain positions and compare alternative continuations to identify which local decisions lead to better final outcomes. Depending on the branching granularity, this line of work includes workflow-level exploration and tool-call-level branching~\cite{hou2025treerl,dong2025agentic,dong2025agentic1,li2025deepagent}.

While such designs improve rollout efficiency and partially densify supervision from sparse outcome rewards, they still rely on coarse-grained units for credit assignment. In particular, they often compress the entire non-tool-call process into \texttt{<thinking>} tags~\cite{dong2025agentic,dong2025agentic1} or rely on fixed workflows~\cite{yao2022react,wang2023plan,shinn2023reflexion}, thereby overlooking the procedural knowledge embedded within the thinking content~\cite{guo2025segment,wang2025beyond,wu2026procedural}. In practice, successful long-horizon agent reasoning is often shaped not by entire thinking blocks or workflow stages, but by a small number of critical \emph{decision points} where alternative continuations lead to substantially different downstream outcomes~\cite{wang2025beyond}. These decision points are latent positions in the trajectory, instantiated as tokens in the generated sequence, and identified by their role in inducing divergence in subsequent reasoning paths. We use \textbf{procedures} to denote the procedural reasoning patterns organized around such high-impact decision points. Although some individual procedures, such as \texttt{plan}~\cite{wang2023plan} and \texttt{reflect}~\cite{shinn2023reflexion}, have drawn attention in training-free prompt engineering~\cite{jin2025hira,wang2026textbf}, their collective role in online agentic RL remains underexplored.

\begin{figure*}[t] 
    \centering
    \includegraphics[width=1\linewidth]{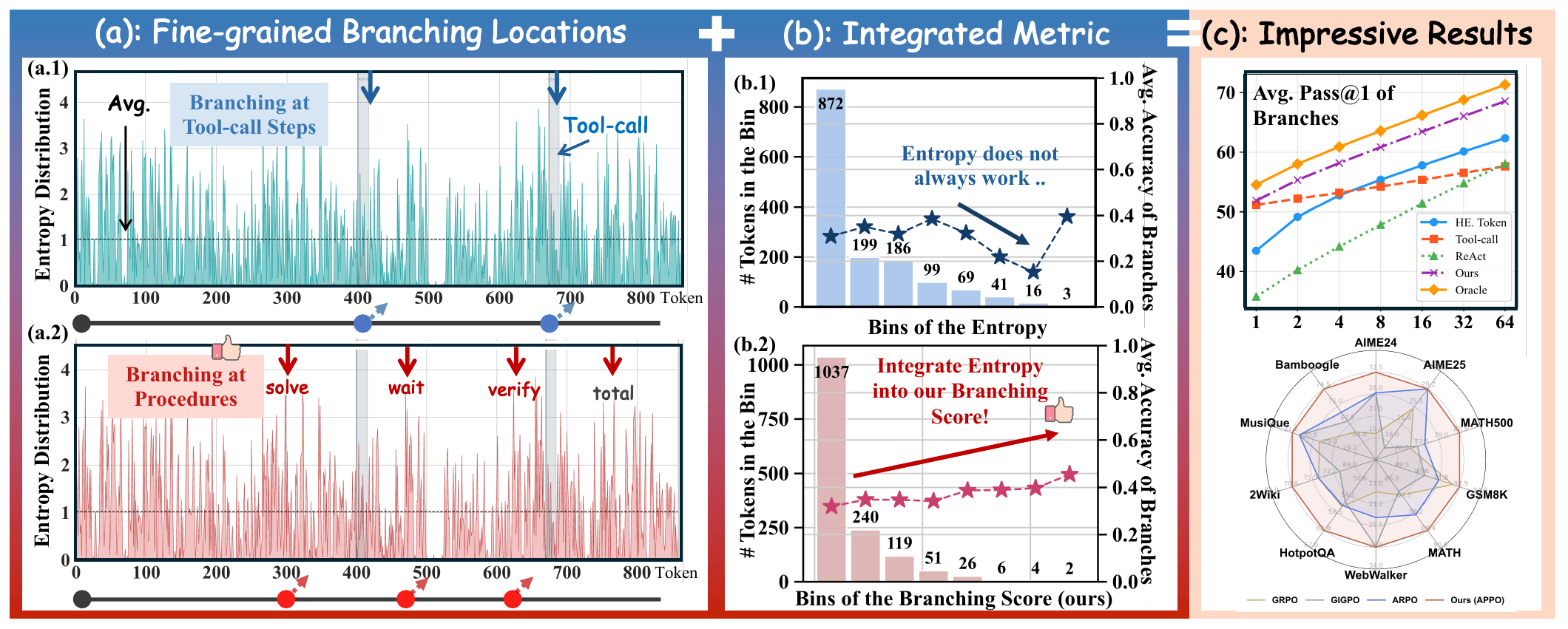} 
    \vspace{-0.4cm}
    \caption{\textbf{(a):} The token entropy distribution in the tool-integrated rollout (sampled from Tool-Star's~\cite{dong2025tool} 54K dataset). \textbf{(b):} Average accuracy of branches generated from each token, shown by bins of the entropy and the APPO's Branching Score (BS). \textbf{(c):} The pass@$k$ of rollouts resampled via different criteria (``oracle'' means to resample from the points with the highest accuracy uncertainty); The performance comparison between APPO and others on 10 datasets.} 
    \vspace{-0.5cm}
    \label{fig:1}
\end{figure*} 

To further investigate how these procedures relate to reasoning accuracy and failure, we conduct a pilot study summarized in Figure~\ref{fig:1}, where we analyze \textbf{(a)} branching locations and \textbf{(b)} the average accuracy of resampled branches under different branching criteria. The results reveal that neither coarse tool-call boundaries nor raw token entropy provides a satisfactory basis for credit assignment. Specifically, Figure~\ref{fig:1}(a) shows that the highest-uncertainty positions are not concentrated at tool-call boundaries, but are broadly distributed throughout the thinking span, suggesting that non-tool-call reasoning contains finer-grained procedural information beyond coarse tool-call units. Moreover, Figure~\ref{fig:1}(b.1) shows that high token entropy alone does not reliably indicate decision significance, as tokens with higher entropy do not consistently yield branches with greater outcome uncertainty, implying that some entropy peaks reflect lexical rarity rather than significance to the task outcome. 

 \begin{tcolorbox}[
  colback=gray!10,     
  colframe=black,      
  boxrule=1pt,        
  arc=2mm,           
  left=4pt,right=4pt,top=4pt,bottom=4pt, 
]
 \textit{These findings highlight the necessity of attending to procedural information within the thinking process beyond tool calls, while underscoring the need for a more principled design of branching criteria to enable finer-grained credit assignment across procedures.}
\end{tcolorbox}

Motivated by these findings, we propose Agentic Procedural Policy Optimization (APPO), an agentic RL algorithm that redefines the unit of credit assignment from coarse-grained heuristic units to fine-grained procedures. Given an initial rollout, APPO extends branching-point selection from tool-call boundaries to the entire sequence, and chooses branching tokens using a comprehensive Branching Score (BS). Beyond token entropy, BS measures how much the current policy increases the likelihood of subsequent continuations relative to the old policy, thereby capturing the future value carried by the current token and filtering out spurious high-entropy positions. Building on this design, we further introduce a procedure-level advantage scaling term based on $\Omega$ to encourage exploration over procedures with high branching value. We evaluate APPO on 13 challenging benchmarks spanning deep information seeking, knowledge-intensive reasoning, and computational problem solving, and show that it consistently improves both task performance and exploration flexibility over strong baselines (Figure~\ref{fig:1}(c)). Our contributions are threefold:  

\vspace{-0.2cm}
\begin{itemize}[leftmargin=*] 
    \item Our preliminary study demonstrates the critical role of \textit{procedures} in agent reasoning. By moving rollout branching from workflows or tool-call boundaries to procedures, it exposes finer-grained structure within the thinking process and enables more informative intermediate supervision.
    \vspace{-0.1cm}
    \item Motivated by these findings, we propose APPO, an agentic RL algorithm that shifts credit assignment from coarse-grained heuristic units to fine-grained procedures. Its Branching Score combines token entropy with the policy-induced likelihood gain of subsequent continuations, enabling the selection of high-value procedures for branching and procedure-level advantage scaling.
    
    \item Extensive experiments validate the effectiveness of our method, which outperforms existing approaches by approximately 3 points across 13 benchmarks, while achieving comparable tool-calls and maintaining interpretability.
\end{itemize}

\section{Related Work}
\noindent\textbf{Agentic Reinforcement Learning.} Reinforcement Learning~\cite{kaufmann2023survey,schulman2017proximal,rafailov2023direct,guo2025deepseek} has proven to be effective in endowing agents with long-horizon complex reasoning and acting capabilities.  Beginning from the actor-critic-based PPO~\cite{schulman2017proximal}, a wide range of studies aim to devise more efficient policy variants, or refine the scale of policy gradients, advantages or regularization terms, like GRPO~\cite{guo2025deepseek}, DAPO~\cite{yu2025dapo}, GSPO~\cite{zheng2025group}, GPG~\cite{chu2026gpg}, Dr. GRPO~\cite{liu2025understanding}. More recent research~\cite{feng2025group,zhai2025agentevolver,lu2026skill0}  focus on the distinct applications of RL in agent tasks, such as fine-grained credit assignment for actions~\cite{feng2025group,dong2025agentic,dong2025agentic1}, agent self-evolution~\cite{wu2025evolver,zhai2025agentevolver,xia2026skillrl} and internalization of agent skills~\cite{lu2026skill0,zhang2026memskill}.

\noindent\textbf{Tree-based RL.}
Tree-based RL can be roughly divided into the following categories: (i) Offline Training~\cite{feng2023alphazero,lai2024step,li2025iterative,xie2024monte}, where  branches split from the same point are stored as offline preference data and later utilized in DPO~\cite{rafailov2023direct}. For example, MCTS-DPO~\cite{xie2024monte} compares the estimated reward of candidate nodes and separates them into chosen / rejected samples; SPORT~\cite{li2025iterative} leverages step-wise sampling and automatic verification to formulate step-wise preference data for agent training. (ii) Online Training~\cite{hou2025treerl,ji2025tree,dong2025agentic,dong2025agentic1,zhao2026training}, where the branches are extended from rollouts in each step and used to calculate group-relative advantages. For example, Tree-GRPO~\cite{ji2025tree} randomly selects think-action steps for branching, while ARPO~\cite{dong2025agentic} identifies the high-entropy tokens following each tool-call and conducts resampling accordingly. (iii) Test-Time Scaling~\cite{wang2026textbf,snell2024scaling,wu2024inference,welleck2024decoding,muennighoff2025s1}, where the model generates multiple branches to elevate the average performance in inference. Notably, the design of branching location is a critical factor in elevating the performance ceiling of tree-based RL.  \textit{In contrast with these lines of studies, APPO treats procedures as finer units of rollout branching in on-policy agentic RL and elicits the reasoning process accordingly. }

 \section{APPO: Agentic Procedural Policy Optimization}

\subsection{Preliminaries}

\noindent\textbf{Agentic Reinforcement Learning.}
We consider a standard agentic RL setting~\cite{gou2023tora,li2025torl,wu2025agentic,dong2025tool,su2025toolorchestra,qian2025toolrl}, where an agent receives a task $x\in\mathcal{X}$ and interacts with a toolset $T$. The rollout consists of $T_a$ interleaved thinking and tool-call steps, followed by $T_b$ answer generation steps:
\begin{equation}
P_\theta(\mathcal{G},y|x,T)=\prod_{t=1}^{T_a}[\pi_\theta(\mathcal{O}_{t}|\mathcal{G}_{<t},x;T)P_{env}(\mathcal{G}_t|\mathcal{O}_{t},\mathcal{G}_{<t})]\prod_{t=1}^{T_b}\pi_\theta(\mathcal{O}_{t}|y_{<t},\mathcal{G},x;T)
\end{equation}
where $P_{env}$ denotes the external environment. If the agent does not call the external tool-use actions in its outputs $\mathcal{O}_{t}$, then $\mathcal{O}_{t}$ equals the overall output $\mathcal{G}_t$ of the step $t$. Based on the rollouts, the training objective of Agentic Reinforcement Learning process can be formulated as follows:
\begin{equation}
    {\rm max}_{\pi_\theta}\mathbb{E}_{x\sim\mathcal{X},y\sim\pi_\theta(\cdot|x;T)}[r_\phi(x,y)]-\beta\,\mathbb{D}_{\rm KL}[\pi_\theta(y|x;T)||\pi_{\rm ref}(y|x;T)]
\end{equation}
where $\pi_\theta$ and $\pi_{\rm ref}$ are the policy and reference models, respectively. $r_\phi$ is the reward function, and $\mathbb{D}_{\rm KL}$ denotes KL divergence. In practice, rewards are converted into advantages $\hat{A}(\cdot)$ through group-level~\cite{shao2024deepseekmath,guo2025deepseek,liu2025understanding}, sequence-level~\cite{zheng2025group}, token-level~\cite{yu2025dapo}, or value-function-based estimators~\cite{schulman2017proximal}.

\vspace{0.3em}
\noindent\textbf{Token Entropy.}
A common branching strategy~\cite{dong2025agentic,dong2025agentic1,hou2025treerl} selects tokens with the top-$k$ entropy:
\begin{equation}
    H_t=-\sum\nolimits_{j=1}^{|\mathcal{V}|}p_{t,j}\log p_{t,j}, \quad \bm{p}_t=\pi_\theta(\cdot|\mathcal{G}_{<t},z;T)={\rm Softmax}(\bm{z}_t/\tau)
\end{equation}
where $|\mathcal{V}|$ is the vocabulary size, $\tau$ is the temperature, and $\bm{z}_t$ are logits. Although entropy reflects local uncertainty, it does not indicate whether a token corresponds to a decision point that changes downstream reasoning. Thus, high-entropy tokens may arise from lexical uncertainty rather than task-relevant procedural choices. APPO addresses this by identifying tokens that instantiate latent decision points, i.e., positions where alternative continuations induce divergent reasoning paths, and uses them to construct procedures for more targeted branching and credit assignment.

\begin{figure*}[t] 
    \centering
    \includegraphics[width=1\linewidth]{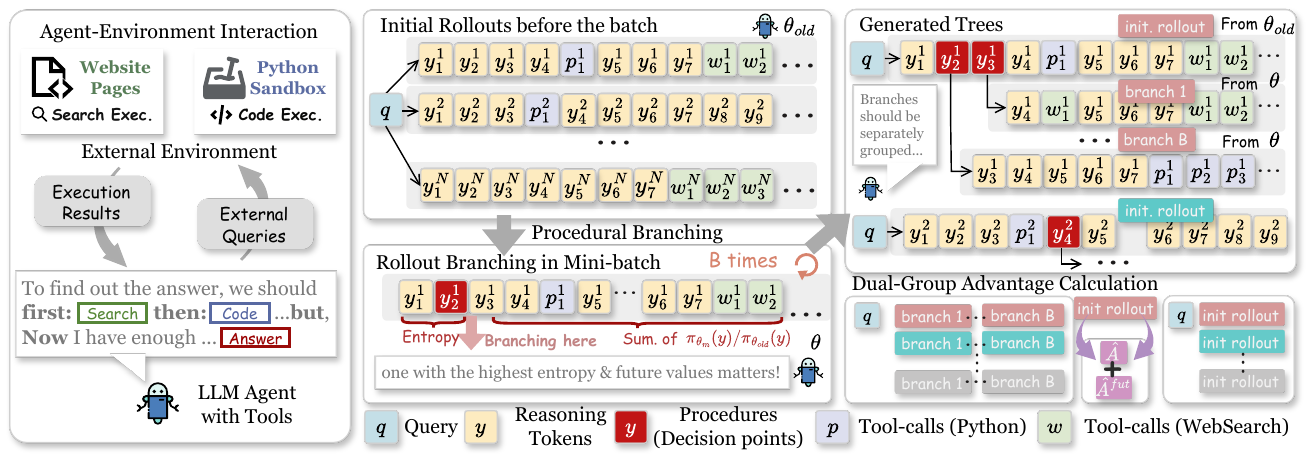} 
    \vspace{-0.4cm}
    \caption{Overview of APPO. The agent first interacts with the environment to generate initial rollouts for each batch. During mini-batch training, APPO identifies fine-grained decision points using the Branching Score (BS), which combines token entropy with future-aware likelihood gains, and resamples continuations from these positions rather than fixed tool-call boundaries. The resulting branches and initial rollouts are then used for dual-group advantage estimation, together with a future-aware advantage term for procedure-level credit assignment.}
    \vspace{-0.3cm}
    \label{fig:2}
\end{figure*}

\subsection{Procedural Rollout Branching}

In this section, we introduce the APPO algorithm in detail, designed to guide LLMs in exploring fine-grained procedural-level behaviors under synthetic branching criteria, as illustrated in Figure \ref{fig:2}. 

\textbf{$\blacktriangleright$ Initialization.}
Given input $x$ and a global rollout budget $M$, the model first generates $N$ full rollouts $\mathcal{T}=\{\mathcal{H}_n|\mathcal{H}_n\sim\pi_{\theta}(\cdot|x)\}^N$, which serve as the roots of $N$ independent trees.

\textbf{$\blacktriangleright$ Sampling.}
For each token $\mathcal{H}_{n,i}$ in rollout $\mathcal{H}_n$, APPO measures its future-aware influence using the accumulated decayed importance sampling ratio ($\pi_{\rm old}$ here is the $\pi_{\theta}$ in initialization):
\begin{equation}
    \Omega_{n,i}=\exp\Big(\sum_{i'\ge i}\gamma^{i'-i}\log \rho_{i'}(\theta)\Big), \quad 
    \rho_{i'}(\theta)=\frac{\pi_\theta(\mathcal{H}_{n,i'}|\mathcal{H}_{n,<i'},x;T)}{\pi_{\rm old}(\mathcal{H}_{n,i'}|\mathcal{H}_{n,<i'},x;T)},
\end{equation}
where $\pi_{\rm old}$ denotes the policy used to generate the initial rollouts. We name $\Omega_{n,i}$ as the future value. A larger $\Omega_{n,i}$ indicates that the model’s continuations are assigned states that are more favored by training, and vice versa. In this case, we treat $\Omega_{n,i}$ as a replacement of the posterior accuracy-uncertainty, which can only be calculated via innumerable times of Monte-Carlo estimation. Also, to mitigate gradient and variance fluctuations caused by accumulation, we introduce a discount factor $\gamma$ to suppress the influence of tokens farther away from the current token, thereby reducing variance. We then combine the future-value term with token entropy to define the Branching Score:
\begin{equation}
    {\rm BS}_{n,i}= \mathcal{Z}({\rm clip}(\Omega_{n,i},1-\epsilon',1+\epsilon');\mathcal{H}_{n}) \cdot \mathcal{Z}(H_{n,i};\mathcal{H}_{n}),
\end{equation}
where $\mathcal{Z}(\cdot;\mathcal{H}_n)$ denotes z-score normalization within rollout $\mathcal{H}_n$. Entropy captures local uncertainty, while $\Omega_{n,i}$ captures future influence; their product selects tokens that are both uncertain and consequential, serving as a sufficient integration of the sequence prior and posterior.

For each rollout $\mathcal{H}_n$, we select the top-$B$ tokens according to ${\rm BS}_{n,i}$ and denote them as $\mathcal{B}_n$ ($|\mathcal{B}_n|=B$). These tokens instantiate latent decision points. We then resample continuations from each $\mathcal{H}_{n,i}\in\mathcal{B}_n$ to generate new branches $\mathcal{H}_n^{new}$, and update the rollout tree as $\mathcal{H}_n \leftarrow \mathcal{H}_n \cup \mathcal{H}_n^{new}$.

\textbf{$\blacktriangleright$ Termination.}
Tree expansion stops when either the remaining budget $M-N$ is exhausted or no further branching is performed. It is worth noting that we sample uniformly across all trees. Assume the total number of generation loops is $L$, then for $L=1$ (i.e., the source of branching is restricted to the initial rollouts), there is $(B+1)\cdot N=M$. With a fixed total budget, increasing $N$ allows us to obtain rollouts with a more diverse initial distribution; conversely, increasing $B$ and $L$ enables more branching on a single tree, thereby providing finer-grained credit assignment. We will conduct detailed ablation studies of these coefficients in subsequent sections and Appendix \ref{appx:e}.

\subsection{Procedural Advantage Estimation and Policy Optimization}

Most policy optimization methods~\cite{ji2025tree,xin2025bfs} rely on group-level advantages, which provide limited credit assignment at intermediate decision points. APPO instead assigns credit to procedure-level decisions by using branches as auxiliary contrastive signals. Unlike most Tree RL methods, our branches are generated by the current mini-batch policy $\pi_\theta$ rather than $\pi_{\rm old}$. Since gradients are not propagated through all branches, we use them only to compute rewards and advantages.

To avoid bias from mixing rollouts generated by different policies, APPO computes group-relative advantages separately for initial rollouts $\mathcal{T}_{init}$ and branches $\mathcal{T}_{branch}$:
\begin{equation}\label{eq:dual}
    \hat{A}^{\rm base}_{n,i} = {\rm avg}\left\{
    \frac{R_n - {\rm mean}(\{R_{n'} \mid \mathcal{H}_{n'} \in \mathcal{T}_*\})}
    {{\rm std}(\{R_{n'} \mid \mathcal{H}_{n'} \in \mathcal{T}_*\})}
    \;\middle|\;
    \mathcal{T}_* \in \{\mathcal{T}_{init}, \mathcal{T}_{branch}\}
    \right\}
\end{equation}
where $R_n$ is the task reward. Since generated tokens serve as observable instantiations of latent decision points, token-level advantages can be viewed as localized credit assigned to the corresponding procedural decisions. Furthermore, inspired by recent studies~\cite{huang2026direction,meng2026sparse}, we argue that critical procedures offer more ``sparse and critical'' subsequences; these procedures serve as the turning point of the rollout and more likely to induce higher policy-induced differences over continuations. We then take the similar design of $\Omega$ to formulate an extra advantage term, which further emphasizes decisions with stronger downstream influence by assigning larger credits:
\begin{equation}\label{eq:fut}
    \hat{A}^{\rm fut}_{n,i} = {\rm clip}_{\epsilon'}\left(
    \exp\Big(\sum\nolimits_{i'\ge i}\gamma^{i'-i}\log \rho_{i'}(\theta)\Big)
    \right), \quad 
    \rho_{i'}(\theta)=\frac{\pi_\theta(\mathcal{H}_{n,i'}|\mathcal{H}_{n,<i'}, {x};T)}
    {\pi_{\rm old}(\mathcal{H}_{n,i'}|\mathcal{H}_{n,<i'}, {x};T)}
\end{equation}
where ${\rm clip}_{\epsilon'}$ clips the value into $[1-\epsilon', 1+\epsilon']$. The final advantage is:
$\hat{A}_{n,i}=\hat{A}^{\rm base}_{n,i}(1+b\cdot\hat{A}^{\rm fut}_{n,i})$,
where $b$ controls the contribution of the future-aware term. Finally, APPO optimizes:
\begin{equation}\label{eq:obj} 
    J(\theta)\!=\!\mathbb{E}_{\substack{x\sim\mathcal{X} \\  \mathcal{H}\sim\pi_{\rm old}(\cdot|x)}}\!\!\left[
    \frac{1}{M}\!\sum_{n=1}^N\frac{1}{|\mathcal{H}_n|}\sum_{i=1}^{|\mathcal{H}_n|}
    {\rm min}\big(\rho_{n,i}(\theta)\hat{A}_{n,i}, {\rm clip}_\epsilon(\rho_{n,i}(\theta))\hat{A}_{n,i}\big)
    \!\!-\!\beta\!\,\mathbb{D}_\mathrm{KL}(\pi_\theta||\pi_{\rm ref})
    \right]
\end{equation}
where ${\rm clip}_{\epsilon}$ clips the term into $[1-\epsilon, 1+\epsilon]$. $\pi_{\rm ref}$ and $\pi_{\rm old}$ denote the reference and behavior policies, respectively, and $\beta$ controls the KL regularization strength. Branches sampled from $\pi_\theta$ are not directly optimized; they provide auxiliary procedural signals for advantage estimation.

\subsection{Theoretical Foundation of APPO}

We provide a theoretical foundation showing that branching at decision points reduces gradient variance and that the future-aware advantage design admits a policy improvement bound. First, APPO motivates high-impact decision points via BS, leading to the following variance reduction property:
\vspace{-0.1cm}
\begin{theorem}[Variance Reduction]\label{theo:1}
Let $g_{\mathrm{APPO}}$ denote the gradient estimator guided by ${\rm BS}$ at decision points, and let $g_{\mathrm{base}}$ denote the estimator using random branching under the same computational budget. Suppose the conditional reward variance of a decision point $D_i$ is monotone in its branching score ${\rm BS}_i$, i.e., $\mathrm{Var}(R\mid D_i)=f({\rm BS}_i)$ with $\nabla f(\cdot)\ge 0$; Then, with $\sigma_i^2 := \mathrm{Var}(R \mid D_i) = f({\rm BS}_i)$ and branches are allocated as $n^{\rm APPO}_i= M \cdot \sigma_i \big/ \sum_{j=1}^K \sigma_j$. APPO can allocate more samples to high-variance decision points, and the resulting estimator satisfies $\mathrm{Var}(g_{\mathrm{APPO}})\le \mathrm{Var}(g_{\mathrm{base}})-\Delta_\Omega({\rm BS})$, where $\Delta_\Omega({\rm BS})\ge 0$ denotes the variance reduction induced by branching score.  
\end{theorem}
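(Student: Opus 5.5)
The argument treats the branched gradient as a stratified Monte Carlo estimator over the $K$ candidate decision points $D_1,\dots,D_K$, and then invokes the Neyman optimal-allocation inequality. Concretely, I would model the gradient contribution of branching as $g=\sum_{i=1}^K w_i\,\bar g_i$. Here $w_i$ is the (fixed) weight of decision point $D_i$ in the policy-gradient decomposition, and $\bar g_i$ is the average of $n_i$ i.i.d. branch gradients resampled from $D_i$. Conditional on the shared prefix, the branches at different decision points are independent. For a fixed prefix, the score-function factor is also bounded and absorbed into $w_i$, so the per-sample variance at $D_i$ is proportional to $\sigma_i^2=\mathrm{Var}(R\mid D_i)$. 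This gives
\[
\mathrm{Var}(g)=\sum_{i=1}^K \frac{w_i^2\sigma_i^2}{n_i},\qquad \sum_i n_i=M .
\]
I would take equal weights $w_i=1/K$, as the statement's allocation rule implicitly assumes. The baseline estimator (random branching) then uses $n_i^{\rm base}=M/K$, and APPO uses $n_i^{\rm APPO}=M\sigma_i/\sum_j\sigma_j$.

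Next I compute both variances. For the baseline, $\mathrm{Var}(g_{\rm base})=\frac{1}{KM}\sum_i\sigma_i^2$. For APPO, $\mathrm{Var}(g_{\rm APPO})=\frac{1}{K^2M}\big(\sum_i\sigma_i\big)^2$. Their difference is
\[
\Delta_\Omega({\rm BS})=\frac{1}{KM}\Big(\sum_i\sigma_i^2-\tfrac{1}{K}\big(\textstyle\sum_i\sigma_i\big)^2\Big)=\frac{1}{M}\,\widehat{\mathrm{Var}}_i(\sigma_i)\ge 0 ,
\]
which is $\frac1M$ times the empirical variance of the $\sigma_i$ across decision points. Nonnegativity follows from Cauchy--Schwarz. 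Equivalently, one can verify by a Lagrangian that $n_i\propto\sigma_i$ minimizes $\sum_i\sigma_i^2/n_i$ under the budget constraint. Because $\sigma_i^2=f({\rm BS}_i)$ with $f$ nondecreasing, $\sigma_i$ is a monotone function of ${\rm BS}_i$. So $\Delta_\Omega$ is genuinely a function of the branching scores, and the allocation $n_i^{\rm APPO}$ gives more branches to higher-${\rm BS}$ points, which is the qualitative claim.

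If the random-branching baseline instead picks decision points uniformly at random rather than deterministically equally, I would add one step. Conditional on the selected counts, the same inequality holds pointwise. Taking expectations, the law of total variance adds a nonnegative between-allocation term to $\mathrm{Var}(g_{\rm base})$, which only enlarges $\Delta_\Omega$.

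The main obstacle is the modeling step rather than the algebra. The first difficulty is justifying that the per-branch gradient variance at $D_i$ is governed by $\mathrm{Var}(R\mid D_i)$, since the score-function norms differ across positions. I would handle this by assuming a uniform bound on $\|\nabla\log\pi_\theta\|$ and stating the result for that surrogate. The second difficulty is that the actual APPO rule is top-$B$ selection rather than proportional allocation. I would treat top-$B$ as the integer/thresholded limit of proportional allocation: concentrating on the highest $\sigma_i$ still beats uniform allocation whenever the unselected $\sigma_i$ are small. I would flag this as an idealization, which the theorem's hypothesis on $n_i^{\rm APPO}$ already makes explicit.
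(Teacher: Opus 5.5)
Your proposal is correct and is essentially the paper's proof. Both arguments write the branched-estimator variance as $\sum_i \sigma_i^2/n_i$ (yours carries an extra constant factor $1/K^2$ from $w_i=1/K$), note that $n_i\propto\sigma_i$ is the Lagrange/Cauchy--Schwarz optimum under $\sum_i n_i=M$, and compare against the uniform allocation $n_i=M/K$; your explicit form $\Delta_\Omega=\frac{1}{M}\widehat{\mathrm{Var}}_i(\sigma_i)$ and the law-of-total-variance remark for a randomized baseline are small refinements over the paper, which simply defines $\Delta_\Omega$ as the variance gap and remarks that top-${\rm BS}$ selection only approximates the proportional rule.
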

\vspace{-0.1cm}
We defer the proof to Appendix \ref{proof:theorem1}. Beyond variance reduction, we further introduce Theorem~\ref{theo:2}, which shows that APPO's advantage design (Eq.\ref{eq:fut})  admits a policy improvement bound under procedural branching:
\vspace{-0.1cm}
\begin{theorem}[Policy Improvement Bound]\label{theo:2}
Assume $\mathrm{KL}(\pi_{\rm new}\,\|\,\pi_{\rm old})\le \epsilon$ and $1-\epsilon' \le \hat{A}^{\rm fut}(s)\le 1+\epsilon'$ for all visited states $s$. Let $\mathcal{J}(\pi)$ denote the expected return, $\omega(s)=1+b\,\hat{A}^{\rm fut}(s)$,
and $q$ the state distribution induced by APPO's BS-guided branching mixture. Then
\begin{equation}
    \mathcal{J}(\pi_{\rm new}) - \mathcal{J}(\pi_{\rm old})
    \ge \frac{1}{1-\gamma}\,
    \mathbb{E}_{s \sim \rho^{q}, a \sim \pi_{\rm new}}
    \left[ \omega(s) A^{\pi_{\rm old}}(s, a) \right]
    - \frac{C\,\epsilon}{(1-\gamma)^2},
\end{equation}
where $C$ depends on $r_{\max}$, $b$, and $\epsilon'$.
\end{theorem}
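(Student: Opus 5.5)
The plan is to adapt the standard trust-region argument (Kakade–Langford performance difference lemma plus the TRPO/CPI bound) to the reweighted surrogate that APPO optimizes. First, I will apply the performance difference lemma:
\[
\mathcal{J}(\pi_{\rm new})-\mathcal{J}(\pi_{\rm old})=\frac{1}{1-\gamma}\mathbb{E}_{s\sim d^{\pi_{\rm new}},a\sim\pi_{\rm new}}[A^{\pi_{\rm old}}(s,a)].
\]
Next I will write $A^{\pi_{\rm old}}=\omega(s)A^{\pi_{\rm old}}-(\omega(s)-1)A^{\pi_{\rm old}}$. This splits the right-hand side into the weighted surrogate term, evaluated under $d^{\pi_{\rm new}}$, plus a correction term $-\frac{1}{1-\gamma}\mathbb{E}[(\omega-1)A^{\pi_{\rm old}}]$. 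The correction term is what the theorem implicitly folds into the constant, so I need to keep it small.

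\textbf{Step 1: control the reweighting.} By assumption $\omega(s)\in[1+b(1-\epsilon'),\,1+b(1+\epsilon')]$, so $\omega$ is bounded above by $\omega_{\max}:=1+b(1+\epsilon')$. For the correction term I will use $\mathbb{E}_{a\sim\pi_{\rm old}}[A^{\pi_{\rm old}}(s,a)]=0$. This gives
\[
|\mathbb{E}_{a\sim\pi_{\rm new}}A^{\pi_{\rm old}}(s,a)|\le \frac{2r_{\max}}{1-\gamma}\,D_{TV}(\pi_{\rm new},\pi_{\rm old})(s).
\]
Combining this with $|\omega-1|\le b(1+\epsilon')$ and Pinsker ($D_{TV}\le\sqrt{\epsilon/2}$), the correction is $O(b(1+\epsilon')r_{\max}\sqrt{\epsilon}/(1-\gamma)^2)$. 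To match the stated linear-in-$\epsilon$ form, I will either bound the squared TV via $D_{TV}^2\le \epsilon/2$ in the way TRPO's $\alpha^2$ term arises, or absorb the square root by assuming $\epsilon\le1$. I expect this mismatch is where the constant $C$ gets its dependence on $b$ and $\epsilon'$.

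\textbf{Step 2: change the state distribution.} I will replace $d^{\pi_{\rm new}}$ by $\rho^{q}$, the discounted state distribution of the BS-guided branching mixture. Branches share prefixes with rollouts of $\pi_{\rm old}$ and are resampled from $\pi_\theta$, which is close to both policies. So I will treat $q$ as a mixture of $\pi_{\rm old}$ and $\pi_\theta$ and bound
\[
\|d^{\pi_{\rm new}}-\rho^{q}\|_1\le \frac{2\gamma}{1-\gamma}\max_s D_{TV}(\pi_{\rm new},q)(s)
\]
using the standard coupling lemma. Multiplying by $\sup_s|\omega(s)\,\mathbb{E}_{a\sim\pi_{\rm new}}A^{\pi_{\rm old}}(s,a)|$, which is at most $\omega_{\max}\cdot\frac{2r_{\max}}{1-\gamma}\max_s D_{TV}(s)$, gives a product of two TV terms. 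That product yields the $\epsilon/(1-\gamma)^2$ scaling exactly, as in Schulman et al., with $C=O(r_{\max}\,\omega_{\max}\,\gamma)$ up to absolute constants.

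\textbf{Main obstacle.} The hard part is Step 2: making rigorous that the branching mixture $q$ is within KL $\epsilon$ of $\pi_{\rm new}$ at every state. The branching policy is the intermediate mini-batch $\pi_\theta$, and branch points are chosen by a data-dependent score. My first attempt will be to argue that $q$ agrees with $\pi_{\rm old}$ before the branch point and with $\pi_\theta$ after it. Selecting the branch point only reweights which prefix states appear; it does not change the conditional action law. I will then assume, as in the PPO clip, that every iterate between $\pi_{\rm old}$ and $\pi_{\rm new}$ satisfies the same KL bound, so the per-state TV to $\pi_{\rm new}$ is at most $2\sqrt{\epsilon/2}$ by the triangle inequality. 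This only changes $C$ by a constant factor. Finally, I will collect all constants into $C=c\,r_{\max}(1+b(1+\epsilon'))$ to conclude.
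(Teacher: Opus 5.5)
The gap is in Step 1. It cannot be closed, because the statement as written is false for every $b>0$.

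Your correction term $\frac{1}{1-\gamma}\mathbb{E}[(\omega(s)-1)A^{\pi_{\rm old}}(s,a)]$ contains only one small factor, $\mathbb{E}_{a\sim\pi_{\rm new}}A^{\pi_{\rm old}}(s,a)=O(\sqrt{\epsilon})$. The other factor satisfies $\omega(s)-1\ge b(1-\epsilon')$ and does not shrink with $\epsilon$. In TRPO, the $\alpha^2$ appears only because this first-order per-state term is multiplied by a second $O(\alpha)$ factor from the occupancy shift. Here there is no second factor, so ``squaring the TV'' is not available. Assuming $\epsilon\le 1$ goes the wrong way, since then $\sqrt{\epsilon}\ge\epsilon$.

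Concretely, take a single state with a self-loop and two actions with rewards $1$ and $0$. Set $\pi_{\rm old}=(\tfrac12,\tfrac12)$, $\pi_{\rm new}=(\tfrac12+\delta,\tfrac12-\delta)$, and $\hat{A}^{\rm fut}\equiv 1$, which the hypotheses allow since they only require $\hat{A}^{\rm fut}\in[1-\epsilon',1+\epsilon']$. Then $\omega\equiv 1+b$ and $A^{\pi_{\rm old}}(s,a)=r(a)-\tfrac12$. Every state distribution, $\rho^{q}$ included, is the point mass at $s$, and $\epsilon=\mathrm{KL}(\pi_{\rm new}\|\pi_{\rm old})=2\delta^2+O(\delta^4)$. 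The claimed inequality reads
\[
\frac{\delta}{1-\gamma}\;\ge\;\frac{(1+b)\,\delta}{1-\gamma}-\frac{C\,\epsilon}{(1-\gamma)^2},
\]
i.e.\ $b(1-\gamma)\le 2C\delta+O(\delta^3)$, which fails for all small $\delta$ whatever $C$ is. The weight $\omega>1$ inflates the first-order surrogate by a factor $1+b$, and slack that is linear in $\epsilon$ cannot absorb that.

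What your Step 1 estimate does prove, with no Step 2 at all, is the bound with $C\sqrt{\epsilon}$ in place of $C\epsilon$. Under a statewise KL bound, $\sup_s|\mathbb{E}_{a\sim\pi_{\rm new}}A^{\pi_{\rm old}}(s,a)|\le\frac{2r_{\max}}{1-\gamma}\sqrt{\epsilon/2}$. Both $\mathcal{J}(\pi_{\rm new})-\mathcal{J}(\pi_{\rm old})$ and the weighted surrogate are controlled by this quantity, giving $C=\sqrt{2}\,r_{\max}\,(2+b(1+\epsilon'))$. The paper's own proof has exactly the same hole. It introduces a ``weighting mismatch'' term but never bounds it, and it turns the Pinsker $O(\sqrt{\epsilon})$ into $C\epsilon$ simply by absorbing constants.

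Step 2 has a separate problem that would matter even for $b=0$. Selecting branch points by ${\rm BS}$ is itself a change of state distribution that no policy-KL bound controls. $\Omega_{n,i}$ depends on the tokens after position $i$, and the $z$-scores use the whole rollout, so the branch time is not a stopping time. Conditioning on where the branch occurs therefore tilts the law of the prefix by an amount unrelated to $\epsilon$. Your phrase ``only reweighting which prefix states appear'' describes precisely an occupancy shift. So $\|d^{\pi_{\rm new}}-\rho^{q}\|_1=O(\sqrt{\epsilon})$ would have to be assumed outright; it does not follow from requiring every intermediate iterate to satisfy the KL bound. The paper sidesteps this by bounding the $\rho^{\pi_{\rm new}}$-versus-$\rho^{q}$ gap with the $\pi_{\rm new}$-versus-$\pi_{\rm old}$ TV alone, which never uses $q$ and is likewise unjustified.
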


Theorem~\ref{theo:2} indicates that APPO provides a valid policy improvement bound under ${\rm BS}$-guided branching. Detailed proof is provided in Appendix~\ref{appx:a}. Together, these results support decision points as a principled unit for agent exploration and credit assignment.

\begin{table*}[t]
\caption{Performance comparison between APPO and others on 10 challenging Deep Reasoning datasets. Notably the best and suboptimal results are in \textbf{bold} and \underline{underlined} respectively. }
\centering
\label{tab:1}
% \vspace{-0.2cm}
\renewcommand{\arraystretch}{1.1}   
\setlength\tabcolsep{4pt} 
\resizebox{1\columnwidth}{!}{
\begin{tabular}{cccccccccccc}
\toprule[1pt]
\multicolumn{1}{c}{\multirow{2}{*}{\raisebox{-0.8ex}{\textbf{Method}}}} & \multicolumn{5}{c}{\textbf{Mathematical Reasoning}} & \multicolumn{5}{c}{\textbf{Knowledge-Intensive Reasoning}} & \multirow{2}{*}{\raisebox{-0.8ex}{\textbf{Avg.}}} \\ \cmidrule(lr){2-6}\cmidrule(lr){7-11}
\multicolumn{1}{c}{} & AIME24 & AIME25 & MATH500 & GSM8K & \multicolumn{1}{c}{MATH} & \multicolumn{1}{l}{WebWalker} & \multicolumn{1}{l}{HotpotQA} & \multicolumn{1}{l}{2Wiki} & \multicolumn{1}{l}{Musique} & \multicolumn{1}{l}{Bamboogle} &  \\ \hline
\multicolumn{12}{c}{\textbf{\textit{Backbone: Llama3.1-8B-Instruct}}} \\ \hline
\multicolumn{12}{l}{\textit{\textbf{Training-Free Method}}} \\
\multicolumn{1}{l}{Zero-shot} & 3.3 & 0.0 & 43.3 & 81.4 & \multicolumn{1}{c}{60.6} & 3.0 & 24.3 & 24.6 & 10.4 & \multicolumn{1}{c}{40.0} & 28.8 \\
\multicolumn{1}{l}{TIR Prompting} & 3.3 & 3.3 & 39.4 & 73.8 & \multicolumn{1}{c}{58.2} & 15.0 & 48.5 & 47.5 & 15.5 & \multicolumn{1}{c}{58.4} & 36.3 \\ \hdashline
\multicolumn{12}{l}{\textbf{\textit{Classic RL Method}}} \\
\multicolumn{1}{l}{GRPO} & 13.3 & \underline{13.3} & 62.4 & 87.4 & \multicolumn{1}{c}{79.2} & 26.5 & 57.8 & 71.8 & 31.0 & \multicolumn{1}{c}{68.2} & 51.1 \\
\multicolumn{1}{l}{Reinforce++} & 13.3 & \textbf{16.7} & 61.4 & 87.0 & \multicolumn{1}{c}{77.2} & 27.5 & 57.1 & 71.6 & 29.9 & \multicolumn{1}{c}{69.1} & 51.1 \\
\multicolumn{1}{l}{DAPO} & 16.7 & \underline{13.3} & 61.2 & 87.4 & \multicolumn{1}{c}{76.4} & 25.5 & 56.6 & 70.3 & 29.2 & \multicolumn{1}{c}{67.3} & 50.4 \\
\multicolumn{1}{l}{GPPO} & 16.7 & 6.7 & 61.8 & 86.6 & \multicolumn{1}{c}{79.4} & 27.5 & \underline{61.8} & 72.8 & 29.8 & \multicolumn{1}{c}{71.9} & 51.5 \\
\multicolumn{1}{l}{CISPO} & 13.3 & 6.7 & 62.2 & 87.0 & \multicolumn{1}{c}{78.2} & 26.0 & 57.3 & \underline{75.6} & 32.2 & \multicolumn{1}{c}{71.8} & 51.0 \\ \hdashline
\multicolumn{12}{l}{\textbf{\textit{Agentic RL Method}}} \\
\multicolumn{1}{l}{GIGPO} & 20.0 & \underline{13.3} & 62.4 & 87.4 & \multicolumn{1}{c}{77.2} & \underline{31.5} & \underline{61.8} & 74.6 & 31.8 & \multicolumn{1}{c}{72.1} & 53.2 $\Delta_{\rm base}$\\
\multicolumn{1}{l}{ARPO} & \underline{23.3} & \textbf{16.7} & \underline{64.6} & \underline{88.0} & \multicolumn{1}{c}{\underline{80.2}} & 30.5 & \textbf{65.4} & 75.5 & \underline{34.8} & \multicolumn{1}{c}{\underline{72.8}} & \underline{55.3} \textcolor[rgb]{0,0.4,0}{\tiny $+3.9\%$}\\ 
\rowcolor{gray!20}
\multicolumn{1}{l}{\textbf{APPO (Ours)}} & \textbf{30.0} & \textbf{16.7} & \textbf{69.4}  & \textbf{88.6} & \multicolumn{1}{c}{\textbf{81.4}} & \textbf{32.0} & \textbf{65.4} & \textbf{79.8} & \textbf{35.2} & \multicolumn{1}{c}{\textbf{76.8}} & \textbf{57.4} \textcolor[rgb]{0,0.4,0}{\tiny $+7.9\%$}\\ \hline
\multicolumn{12}{c}{\textbf{\textit{Backbone: Qwen2.5-7B-Instruct}}} \\ \hline
\multicolumn{12}{l}{\textit{\textbf{Training-Free Method}}} \\
\multicolumn{1}{l}{Zero-shot} & 10.0 & 10.0 & 70.6 & 90.2 & \multicolumn{1}{c}{82.0} & 2.0 & 12.2 & 12.6 & 6.6 & \multicolumn{1}{c}{24.0} & 32.0 \\
\multicolumn{1}{l}{TIR Prompting} & 6.7 & 10.0 & 68.2 & 64.6 & \multicolumn{1}{c}{78.2} & 15.5 & 14.8 & 18.3 & 9.5 & \multicolumn{1}{c}{23.6} & 30.9 \\ \hdashline
\multicolumn{12}{l}{\textbf{\textit{Classic RL Method}}} \\
\multicolumn{1}{l}{GRPO} & 23.3 & \underline{26.7} & 78.0 & 92.8 & \multicolumn{1}{c}{87.8} & 22.0 & 59.0 & \underline{76.1} & 30.6 & \multicolumn{1}{c}{68.4} & 56.5 \\
\multicolumn{1}{l}{Reinforce++} & 26.7 & 23.3 & 78.0 & \underline{92.2} & \multicolumn{1}{c}{\underline{88.8}} & 26.0 & 55.1 & 68.9 & 25.2 & \multicolumn{1}{c}{64.9} & 54.9 \\
\multicolumn{1}{l}{DAPO} & 20.0 & 23.3 & \underline{80.4} & 91.0 & \multicolumn{1}{c}{\underline{88.8}} & 24.0 & 57.7 & 68.4 & 28.6 & \multicolumn{1}{c}{65.5} & 54.8 \\
\multicolumn{1}{l}{GPPO} & 26.7 & 23.3 & \underline{80.4} & 91.0 & \multicolumn{1}{c}{\underline{88.8}} & \underline{31.0} & \underline{60.7} & 74.2 & \textbf{31.5} & \multicolumn{1}{c}{\underline{72.4}} & 58.0 \\
\multicolumn{1}{l}{CISPO} & 26.7 & \textbf{30.0} & 77.8 & 91.4 & \multicolumn{1}{c}{86.2} & 29.0 & 59.3 & 72.1 & 29.1 & \multicolumn{1}{c}{70.2} & 57.2 \\ \hdashline
\multicolumn{12}{l}{\textbf{\textit{Agentic RL Method}}} \\
\multicolumn{1}{l}{GIGPO} & \underline{30.0} & 20.0 & 78.4 & 91.6 & \multicolumn{1}{c}{87.6} & {30.5} & 58.1 & 73.5 & \underline{31.1} & \multicolumn{1}{c}{70.1} & 57.1 $\Delta_{\rm base}$ \\
\multicolumn{1}{l}{ARPO} &  \underline{30.0} & \textbf{30.0} & 78.8 & \underline{92.2} & \multicolumn{1}{c}{\underline{88.8}} & 26.0 & 58.8 & \underline{76.1} & \underline{31.1} & \multicolumn{1}{c}{{71.5}} & \underline{58.3} \textcolor[rgb]{0,0.4,0}{\tiny $+2.1\%$} \\
\rowcolor{gray!20}
\multicolumn{1}{l}{\textbf{APPO (Ours)}} & \textbf{36.7} & \textbf{30.0}  & \textbf{81.0}  & \textbf{93.2} & \multicolumn{1}{c}{\textbf{90.0}} & \textbf{33.5} & \textbf{66.5} & \textbf{81.5} & \textbf{31.5} & \multicolumn{1}{c}{\textbf{77.6}} & \textbf{62.2}  \textcolor[rgb]{0,0.4,0}{\tiny $+8.9\%$}  \\ \bottomrule[1pt]
\end{tabular}}\vspace{-0.4cm}
\end{table*}

\begin{table*}[t]
\caption{Performance comparison between APPO and others on 4 challenging DeepSearch datasets. Results with \textcolor{gray}{gray} color are from larger / closed-source models and only for reference. Notably the best and suboptimal results are in \textbf{bold} and \underline{underlined} respectively.}
\centering
\label{tab:2}
% \vspace{-0.2cm}
\renewcommand{\arraystretch}{1.1}   
\setlength\tabcolsep{7pt} 
\resizebox{1\columnwidth}{!}{
\begin{tabular}{lccccccccccccc}
\toprule[1pt]
\multicolumn{1}{c}{\multirow{2}{*}{\textbf{Method}}} & \multicolumn{4}{c}{\textbf{General AI Assistant}} & \multicolumn{4}{c}{\textbf{WebWalkerQA}} & \multicolumn{4}{c}{\textbf{Humanity's Last Exam}} & \textbf{Xbench}   \\ \cmidrule(lr){2-5}\cmidrule(lr){6-9}\cmidrule(lr){10-13}\cmidrule(lr){14-14} 
\multicolumn{1}{c}{} & Lv.1 & Lv.2 & Lv.3 & Avg. & Easy & Med. & Hard & Avg. & NS & CE & SF & Avg. & Avg.  \\ \hline
Qwen3-32B-thinking & 26.2 & 12.1 & 0.0 & 14.9 & 6.9 & 1.1 & 2.9 & 3.1 & 14.6 & 9.8 & 8.4 & \underline{12.6} & 14.0  \\
DeepSeek-R1-32B & 21.5 & 13.6 & 0.0 & 14.2 & 7.5 & 1.4 & 4.2 & 3.8 & 6.6 & 5.1 & 6.5 & 6.4 & 10.0  \\
QwQ-32B & 30.9 & 6.5 & 5.2 & 18.9 & 7.5 & 2.1 & 4.6 & 4.3 & 11.5 & 7.3 & 5.2 & 9.6 & 10.7   \\
GPT-4o & 23.1 & 15.4 & 8.3 & 17.5 & 6.7 & 6.0 & 4.2 & 5.5 & 2.7 & 1.2 & 3.2 & 2.6 & 18.0   \\
DeepSeek-R1-671B & \textcolor{gray}{40.5} & \textcolor{gray}{21.2} & \textcolor{gray}{5.2} & \textcolor{gray}{25.2} & \textcolor{gray}{5.0} & \textcolor{gray}{11.8} & \textcolor{gray}{11.3} & \textcolor{gray}{10.0} & \textcolor{gray}{8.5} & \textcolor{gray}{8.1} & \textcolor{gray}{9.3} & \textcolor{gray}{8.6} & \textcolor{gray}{32.7}  \\
o1-preview & \textcolor{gray}{/} & \textcolor{gray}{/} & \textcolor{gray}{/} & \textcolor{gray}{/} & \textcolor{gray}{11.9} & \textcolor{gray}{10.4} & \textcolor{gray}{7.9}& \textcolor{gray}{9.9}& \textcolor{gray}{12.9} & \textcolor{gray}{8.1}& \textcolor{gray}{6.6} & \textcolor{gray}{11.1} & \textcolor{gray}{/}   \\ \hline
\multicolumn{14}{l}{\textbf{\textit{Single-Enhanced Method (Qwen3-8B)}}} \\ 
Vanilla RAG & 28.2 & 15.4 & \textbf{16.7} & 20.4 & 8.9 & 10.7 & 9.9 & 10.0 & 5.1 & 1.6 & 12.9 & 5.8 & 8.0   \\
Search-o1 & 35.9 & 15.4 & 0.0 & 21.4 & 6.7 & 15.5 & 9.7 & 11.5 & 7.6 & 2.7 & 5.3 & 6.4 & 10.0   \\
WebThinker & 43.6 & 11.5 & 0.0 & 22.3 & 6.7 & 13.1 & 16.9 & 13.0 & 7.3 & 4.0 & 6.3 & 6.6 & 13.0   \\
ReAct & 35.9 & 17.3 & \underline{8.3} & 23.3 & 8.9 & 16.7 & \underline{18.3} & 15.5 & 4.2 & 4.0 & 6.3 & 4.6 &  16.0   \\ \hdashline
\multicolumn{14}{l}{\textbf{\textit{RL-based Method (Qwen3-8B)}}} \\ 
Qwen3-8B & 28.1 & 15.4 & \textbf{16.7} & 20.4 & 0.0 & 2.4 & 2.8 & 2.0 & 3.9 & 2.7 & 8.4 & 4.6 & 9.0   \\
\,+GRPO &  {48.7} &  {25.0} & \underline{8.3} & 32.0 & 28.9 & 32.1 & \textbf{28.2} & 30.0 & \underline{7.9} & 4.0 & 10.5 & 7.8 & 20.0   \\
\,+ARPO & \underline{53.9} & \underline{32.7} & \textbf{16.7} & \underline{38.8} & \underline{31.1} & \underline{35.7} & \textbf{28.2} & \underline{32.0} & 7.3 & \underline{6.7} & \textbf{15.8} & \underline{8.8} & \underline{25.0}   \\ 
\rowcolor{gray!20}
\,+\textbf{APPO (Ours)} & \textbf{59.0} & \textbf{36.5} & \textbf{16.7} & \textbf{42.7} & \textbf{37.5}  & \textbf{36.4} & \textbf{28.2} & \textbf{33.8} &\textbf{14.7} & \textbf{8.3 }&\underline{13.0}  &\textbf{13.4}  & \textbf{28.0}  \\ \hline
\multicolumn{14}{l}{\textbf{\textit{Single-Enhanced Method (Qwen3-14B)}}} \\ 
Vanilla RAG & 38.5 & 19.2 & 8.3 & 25.2 & 17.8 & 13.1 & 11.3 & 13.5 & 5.5 & 6.3 & 9.4 & 6.0 & 15.0   \\
Search-o1 & 48.7 & 23.1 & 0.0 & 30.1 & 11.1 & 21.4 & 16.9 & 17.5 & 6.4 & 4.0 & 10.5 & 6.8 & 21.0   \\
WebThinker & 48.7 & 26.9 & 8.3 & 33.0 & 13.3 & 23.8 & 18.3 & 19.5 & 7.0 & 4.0 & 9.5 & 7.0 & 23.0   \\
ReAct & 48.7 & 25.0 & 8.3 & 32.0 & 11.1 & 20.2 & 12.7 & 15.5 & 5.8 & 5.3 & 10.5 & 6.6 & 20.0  \\ \hdashline
\multicolumn{14}{l}{\textbf{\textit{RL-based Method (Qwen3-14B)}}} \\ 
Qwen3-14B & 33.3 & 13.5 & 0.0 & 19.4 & 6.7 & 2.4 & 4.2 & 4.0 & 5.5 & 6.7 & 11.6 & 6.8 & 14.0  \\
\,+GRPO & 51.3 & 34.6 & 0.0 & 36.9 & 35.6 & 42.9 & 35.2 & 38.5 & 7.9 & 6.7 & 12.6 & 8.6 & 27.0   \\
\,+ARPO & \underline{56.4} & \underline{40.4} & \underline{16.7} & \underline{43.7} & \underline{40.0} & \underline{44.1} & \underline{36.6} & \underline{40.5} & \underline{10.3} & \underline{10.7} & \underline{13.7} & \underline{10.0} & \underline{32.0}   \\ 
\rowcolor{gray!20}
\,+\textbf{APPO (Ours)} & \textbf{61.5} & \textbf{42.3} & \textbf{16.7} & \textbf{46.6} & \textbf{43.0}  & \textbf{48.6} & \textbf{37.6} & \textbf{43.4} &\textbf{11.5} & \textbf{12.8} &\textbf{14.5}  &\textbf{12.3}  & \textbf{34.0}  \\ \bottomrule[1pt]
\end{tabular}}\vspace{-0.4cm}
\end{table*}	 	 	 	 	 	 	 	 	 	 	 	 

\section{Experiments}
\subsection{Experiment Setup}
\noindent\textbf{Datasets.} Following~\cite{dong2025agentic1,dong2025agentic}, we employ  the following three kinds of benchmarks to comprehensively evaluate the model's capability in long-term complex tool usage: \textit{(1) Mathematical Reasoning,} including the typical math problems like GSM8K~\cite{cobbe2021training} and MATH~\cite{hendrycks2021measuring}; and competitive math problem like AIME24~\cite{aime24}, AIME25~\cite{aime25} and MATH500~\cite{lightman2023let}; \textit{(2) Knowledge-Intensive Reasoning,} including four multi-hop question-answering problems based on Wikipedia, i.e., HotpotQA~\cite{yang2018hotpotqa}, 2WikiMultihopQA~\cite{ho2020constructing}, Musique~\cite{trivedi2022musique} and Bamboogle~\cite{press2023measuring}. We also include the web multi-hop task WebWalker~\cite{wu2025webwalker}; \textit{(3) Deep Search,} including the challenging General AI Assistant (GAIA)~\cite{mialon2023gaia}, Humanity's Last Exam~\cite{phan2025humanity}, and information retrieval tasks like WebWalker~\cite{wu2025webwalker} and Xbench~\cite{chen2025xbench}.

\noindent\textbf{Baselines.} Four kinds of baselines are selected for comprehensive comparison: \textit{(1) Vanilla RL Method,} including GRPO~\cite{guo2025deepseek}, Reinforce++~\cite{hu2025reinforce++}, DAPO~\cite{yu2025dapo}, GPPO~\cite{su2025klear} and CISPO~\cite{chen2025minimax}; \textit{(2) Agentic RL Method}, including GIGPO~\cite{feng2025group} and ARPO~\cite{dong2025agentic}; \textit{(3) LLM Backbones,} where we adopt the instruct version of Llama3.1-8B~\cite{dubey2024llama} and Qwen2.5~\cite{hui2024qwen2} as the generalized backbones; we also report the results of QwQ~\cite{team2024qwq}, DeepSeek-R1~\cite{guo2025deepseek}, GPT-4o~\cite{hurst2024gpt}, o1-preview~\cite{hurst2024gpt} and Qwen3-32B~\cite{yang2025qwen3}; \textit{(4) Search Agents and others,} where we adopt the Search-o1~\cite{li2025search} and Webthinker~\cite{li2025webthinker}; For other mechanisms we employ the RAG~\cite{lewis2020retrieval} and ReAct~\cite{yao2022react} framework.

\noindent\textbf{Metrics.} We employ the F1-score to evaluate the performance of models over four question-answering tasks. For other tasks, we employ  LLM-as-a-Judge and deploy a Qwen2.5-72B-Instruct model with vLLM for correctness calculation. We use pass@1 over the LLM-judged results.   

\noindent\textbf{Implementation Details.}  Following~\cite{ma2026fipo}, we implement the decay rate in procedure advantage as $\gamma=2^{-\frac{1}{\tau}}$ and keeps $\tau=32$. The overall methodology is implemented on the VeRL framework~\cite{sheng2025hybridflow} where tool-call results are detached from gradient computation.  The batch-size is set to 128 and a PPO mini-batchsize of 16, resulting in 8 gradient updates in one step. We also set the coefficient $\beta=0$ to stabilize training. We train APPO in both reasoning and search tasks following ARPO~\cite{dong2025agentic}, where models are separately trained for 2 and 5 epochs. The search results are the top-10 snippets retrieved by Bing search engine following~\cite{jin2025flashrag,li2025retrollm,dong2025tool} and python code is executed in a sandbox environment. To ensure the model possesses basic tool-use capabilities, we directly adopt the SFT pipeline from ARPO~\cite{dong2025agentic}. More implementation details are attached to Appendix \ref{appx:c}. 

\subsection{Main Results}
\noindent\textbf{Results on Mathematical Reasoning and Knowledge-Intensive Reasoning.} 
As shown in Table~\ref{tab:1}, APPO consistently achieves superior performance across both task groups. 
We make the following observations: 
(i) On mathematical reasoning tasks, APPO surpasses all baselines, improving over the previous best agentic RL method by an average of 2.45 points. This suggests that branching at high-impact decision points, rather than fixed turn-level boundaries, enables the model to better explore and reinforce critical deductive paths for complex problem solving. (ii) On knowledge-intensive reasoning tasks, APPO ranks first on nearly all datasets, showing strong performance on multi-hop information synthesis tasks. This indicates that identifying and assigning higher credit to high-impact decision points helps the model strengthen useful procedural reasoning patterns, such as planning, reflection, and verification, leading to consistent gains over both classical RL and prior agentic RL methods that rely on coarser credit assignment.

\noindent\textbf{Results on DeepSearch Tasks.} 
The results in Table~\ref{tab:2} demonstrate both the effectiveness and efficiency of APPO in complex deep search scenarios. While even powerful large-scale closed-source models achieve unsatisfactory performance, APPO establishes new state-of-the-art results at both the 8B and 14B model scales, consistently outperforming strong baselines such as ARPO. For instance, on the GAIA benchmark, APPO achieves scores of 42.7 and 46.6 with Qwen3-8B and Qwen3-14B, respectively, while also improving performance on HLE. These gains are particularly notable given that deep search tasks involve long horizons and intricate tool-use patterns. The advantage of APPO stems from its decision-point-based branching mechanism: by focusing on high-impact positions that induce divergence in reasoning trajectories, APPO avoids excessive branching on consecutive high-entropy but low-impact tokens, while encouraging more diverse and effective search paths. 
Consequently, by prioritizing structurally meaningful decision points, APPO enables more targeted exploration and improves the reliability of result-oriented agentic reasoning.

\begin{figure*}[t] 
    \centering
    \includegraphics[width=1\linewidth]{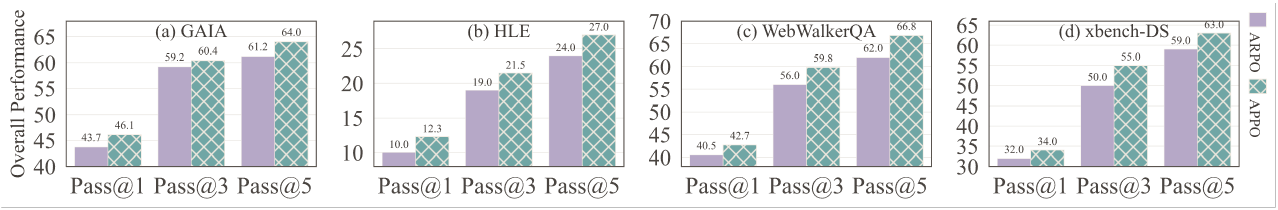} 
    \vspace{-0.5cm}
    \caption{Analysis of Pass@1 to Pass@5 of ARPO and APPO on four datasets respectively.} 
    \vspace{-0.5cm}
    \label{fig:passk}
\end{figure*}

\begin{table}[t]
\centering
\begin{minipage}[t]{0.50\textwidth}
\centering
\caption{Analysis on branching config $(L=1)$.}\label{tab:budget}
\vspace{0.15cm}
\renewcommand{\arraystretch}{1.14}   
\setlength\tabcolsep{2pt} 
\resizebox{1\columnwidth}{!}{
\begin{tabular}{lcccccc}
\toprule[1pt]
\textbf{Method} & \textbf{Web.} & \textbf{Hotpot.} & \textbf{2Wiki} & \textbf{Musiq.} & \textbf{Bamb.} & \textbf{Avg.} \\ \hline
ARPO $(M=16)$ & \multicolumn{1}{c}{26.0} & \multicolumn{1}{c}{58.8} & \multicolumn{1}{c}{76.1} & \multicolumn{1}{c}{31.1} & \multicolumn{1}{c}{71.5} & 52.7 \\ \hline
\multicolumn{7}{c}{$M=4$} \\ \hline
APPO $(N=2, B=1)$ & 25.4 & 60.4 & 77.0 & 30.6 & 70.2 & 52.7 \\
APPO $(N=1, B=3)$ & 26.7 & 63.8 & 77.6 & 31.0 & 70.6 & 53.9 \\ \hline
\multicolumn{7}{c}{$M=8$} \\ \hline
APPO $(N=2, B=3)$ & 28.0 & 65.0 & 79.4 & 30.8 & 73.3 & 55.3 \\ \hline
\multicolumn{7}{c}{$M=16$} \\ \hline
\rowcolor{gray!20}
APPO $(N=4, B=3)$ & \textbf{33.5} & \multicolumn{1}{c}{\textbf{66.5}} & \multicolumn{1}{c}{\textbf{81.5}} & \multicolumn{1}{c}{\textbf{31.5}} & \multicolumn{1}{c}{77.6} & \textbf{58.1} \\
APPO $(N=8, B=1)$ & 31.8 & 65.8 & 81.2 & 31.0 & \textbf{79.5} & 57.9 \\
APPO $(N=2, B=7)$ & 29.2 & 65.5 & 79.0 & 31.2 & 75.7 & 56.1 \\ \bottomrule[1pt]
\end{tabular}}
\end{minipage}
\hfill 
\vspace{-0.1cm}
\begin{minipage}[t]{0.46\textwidth}
\centering
\caption{Ablation study on components.}\label{tab:comp}
 \vspace{0.15cm}
\renewcommand{\arraystretch}{1.07}   
\setlength\tabcolsep{2pt} 
\resizebox{1\columnwidth}{!}{
\begin{tabular}{lcccccc}
\toprule[1pt]
\textbf{Method} & \textbf{Web.} & \textbf{Hotpot.} & \textbf{2Wiki} & \textbf{Musiq.} & \textbf{Bamb.} & \textbf{Avg.} \\ \hline
\multicolumn{7}{l}{\textbf{\textit{Llama3.1-8B-Instruct}}} \\ 
\rowcolor{gray!20}
APPO & \textbf{32.0} & \textbf{65.5} & \textbf{79.8} & \textbf{35.2} & \textbf{76.8} & \textbf{57.9} \\
BS$\rightarrow$Ent. & 31.4 & 65.4 & 78.0 & 34.8 & 75.5 & 57.0 \\ 
w/o $\hat{A}^{\rm fut}$ & 29.8 & 64.8 & 78.6 & 35.0 & 74.5 & 56.5 \\ 
w/o Dual-group & 30.9 &	65.0 	&76.4 &	35.5 &	76.2 &	56.8 \\
\hline
\multicolumn{7}{l}{\textbf{\textit{Qwen2.5-7B-Instruct}}} \\ 
\rowcolor{gray!20}
APPO & \textbf{33.5} & \textbf{66.5} & \textbf{81.5} & \textbf{31.5} & \textbf{77.6} & \textbf{58.1} \\ 
BS$\rightarrow$Ent. & 29.4 & 64.8 & 80.5 & 31.0 & 75.7 & 56.3 \\ 
w/o $\hat{A}^{\rm fut}$ & 30.3 & 60.0 & 79.9 & 30.6 & 72.6 & 54.7 \\
w/o Dual-group & 33.0 &	63.8 &	75.8 &	30.6 &	76.6 &	56.0  \\
\bottomrule[1pt]
\end{tabular}}
\end{minipage}
\vspace{-0.6cm}
\end{table}

\begin{figure*}[t] 
    \centering
    \includegraphics[width=1\linewidth]{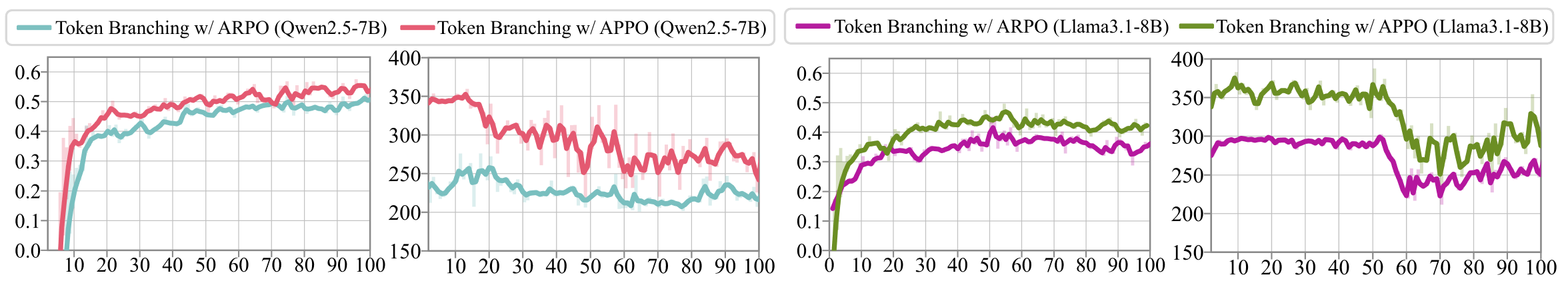} 
    \vspace{-0.7cm}
    \caption{\textbf{The training dynamics} of pure-token branching and APPO's procedural guided branching.} 
    \vspace{-0.4cm}
    \label{fig:dyna}
\end{figure*}

\begin{table}[t]
\centering
\begin{minipage}[t]{0.5\textwidth}
\centering
\includegraphics[width=1\linewidth]{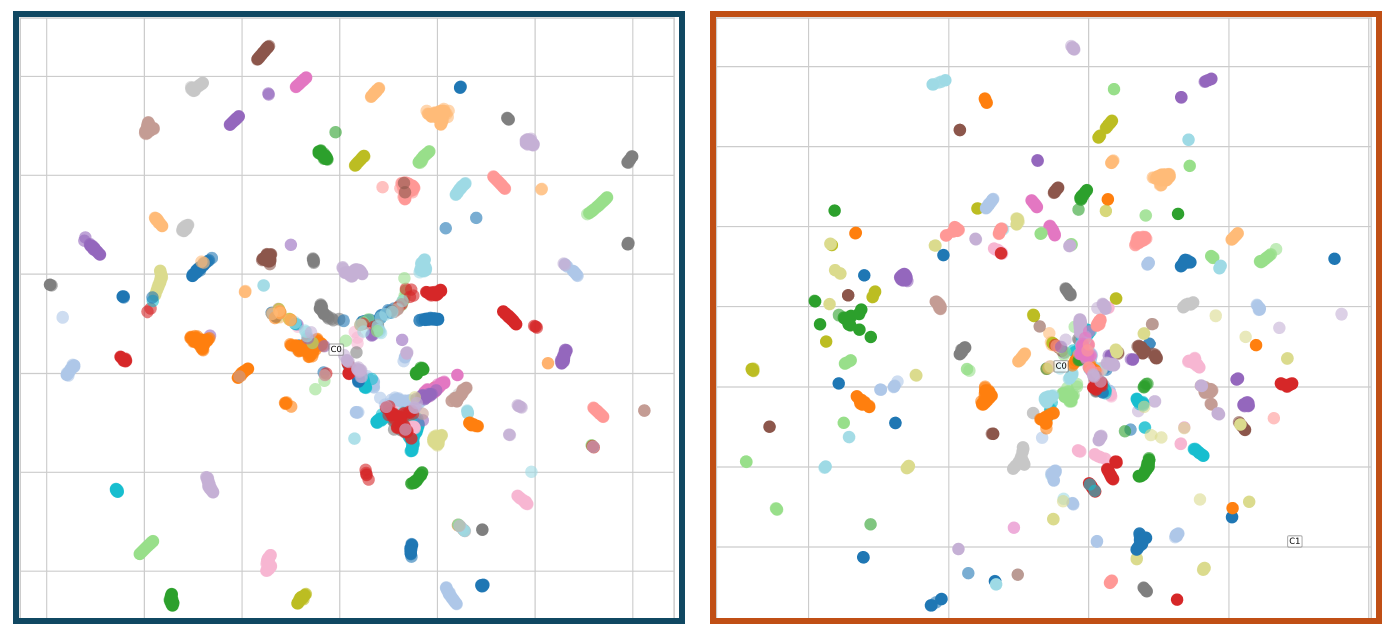}
\captionof{figure}{The visualization of the branch distributions from ARPO \textbf{(left)} and APPO \textbf{(right)}.} 
\label{fig:julei} 
\end{minipage}
\hfill % 填充水平空间
\vspace{-0.1cm}
\begin{minipage}[t]{0.48\textwidth}
\centering

\includegraphics[width=0.95\linewidth]{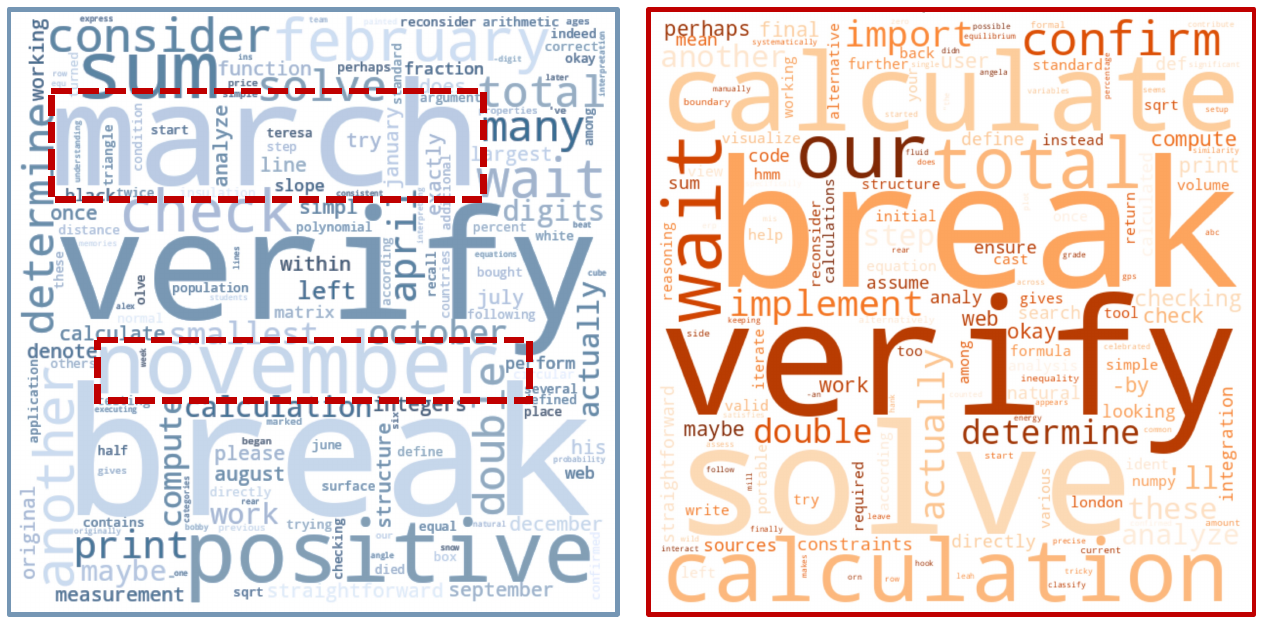}
\captionof{figure}{The word cloud of \textcolor{morandiblue}{\textbf{high-entropy}} tokens and those selected by our \textcolor{morandired}{\textbf{BS}} metric.} 
\label{fig:ciyun} 
\setlength\tabcolsep{5pt}  
\end{minipage}
\vspace{-0.6cm}
\end{table}

\vspace{-0.3cm}
\subsection{Scaling Analysis of APPO}

\noindent\textbf{Pass@K Analysis.} 
Figure~\ref{fig:passk} shows that the benefit of APPO extends beyond improving the single best trajectory and instead improves the overall distribution of candidate trajectories, as reflected in the Pass@K results. Across all four datasets, the advantage of APPO is consistently preserved and further enlarged as $k$ increases, indicating that the method improves not only top-1 correctness but also the diversity of valid reasoning paths in the sampling space. On GAIA, Qwen3-14B improves from 43.7 to 46.1 at Pass@1, with the gap further expanding at Pass@5 from 61.2 to 64.0. A similar trend appears on WebWalkerQA , where the gain increases from 40.5 to 42.7 at Pass@1 and from 62.0 to 66.8 at Pass@5. These results suggest that APPO promotes exploration over structurally distinct reasoning trajectories rather than local token-level variations, leading to a broader set of high-quality candidate solutions and significant improvement of pass@K metrics.

\noindent\textbf{Branching Configuration Analysis.} 
Table~\ref{tab:budget} shows that the effectiveness of APPO depends not only on whether branching is introduced, but also on how the rollout budget is allocated between the number of initial trees $N$ and the number of selected tokens $B$. 
Under the same total budget, balanced configurations consistently outperform both extremes. 
For example, when $M=16$, APPO with $(N=4, B=3)$ achieves the best average score of 58.1, exceeding both $(N=8, B=1)$ with 57.9 and $(N=2, B=7)$ with 56.1. 
Increasing $N$ improves the diversity of initial trajectories but leaves less budget for expanding high-impact decision points. 
In contrast, increasing $B$ enables deeper exploration around these decision points but concentrates the budget on fewer initial paths, reducing global coverage. 
APPO performs best in the middle regime because decision-point-guided branching is most effective when the model first explores diverse root trajectories and then expands informative internal decisions.

\noindent\textbf{Ablation on Components.} 
Table~\ref{tab:comp} verifies that all components of APPO contribute to the final gains, with complementary rather than redundant effects. Specifically, replacing the Branching Score (BS) with entropy leads to consistent performance drops of 1.7  and 0.9 points on the two backbones. This suggests that pure entropy fails to prioritize high-impact decision points that are likely to alter downstream reasoning paths. Disabling dual-group advantage estimation also yields a clear degradation, since initial rollouts and branches are generated from different policy distributions and should be compared within their respective groups. Finally, removing $\hat{A}^{\rm fut}$ causes an even larger drop, especially on Qwen2.5-7B, where the average score decreases from 58.1 to 54.7. Overall, BS improves where the model explores, while dual-group advantage estimation and $\hat{A}^{\rm fut}$ ensure that branches are compared under appropriate distributions and assigned more fine-grained credit.
\vspace{-0.3cm}
\subsection{Qualitative Analysis of APPO}

\noindent\textbf{Training Dynamics.} 
Figure~\ref{fig:dyna} compares the training dynamics of APPO and ARPO. 
APPO reaches a higher final reward and follows a more stable improvement trajectory throughout training. 
This advantage appears early and becomes more pronounced as optimization proceeds, especially on Qwen2.5-7B. 
These results suggest that APPO does not merely improve local exploration, but allocates exploration more effectively during training. 
Specifically, APPO branches around high-impact decision points that reflect meaningful differences in reasoning strategy, leading to larger and smoother reward improvements. 
APPO maintains this advantage on both backbones, indicating that the gains mainly come from the algorithmic design rather than model-specific behavior. 

\noindent\textbf{Diversity Analysis.} 
Figure~\ref{fig:julei} presents the DBSCAN~\cite{ester1996density} clustering results of rollouts sampled by ARPO and APPO, leveraging UMAP~\cite{mcinnes2018umap}. 
APPO produces more compact and better-separated clusters than ARPO, whose branches are more diffuse and less structured. 
This suggests that APPO improves diversity at the level of reasoning strategy. 
Branches around similar high-impact decision points remain semantically coherent, while branches from different decision points show larger semantic gaps.  These results indicate that APPO gains from producing branches that are more structurally distinct and therefore more informative for credit assignment.

\noindent\textbf{Interpretation of the BS metric.} We visualize the tokens selected by the pure high-entropy and by our BS metric (dynamically tracked in the experiment and counted finally) in Figure~\ref{fig:ciyun}. While high-entropy tokens do include salient reasoning keywords such as ``verify'', ``sum'', and ``break'', they also contain many rare nouns like ``march'', ``november''. We attribute these cases to \textit{long-tailed effects} of the vocabulary, where uncertainty reflects absolute token rarity rather than reasoning difficulty, and optimizing such tokens is unlikely to produce transferable reasoning gains. In contrast, our BS metric filters out these cases by emphasizing tokens with larger downstream distributional shifts between the current and old policies. As a result, it is more likely to select tokens that actually redirect the reasoning trajectory and steer  the success and failure of the continuations.
 
\vspace{-0.1cm}
\section{Conclusion}

We proposed \textbf{APPO}, an agentic RL algorithm that shifts branching and credit assignment from coarse tool- or workflow-level units to fine-grained decision points in the generated sequence. APPO uses a Branching Score to select high-value branching locations and introduces an extra future-aware advantage scaling for more targeted credit assignment. Experiments on 13 benchmarks show that APPO consistently outperforms strong baselines while keeping efficient tool-calls. Our findings are also generic and enlightening, suggesting that modeling procedural decisions offers a practical direction for improving exploration and credit assignment in agentic RL.

% \section*{References}
% \medskip
{
    \small
    \bibliographystyle{abbrv}
    \bibliography{main}
}

%%%%%%%%%%%%%%%%%%%%%%%%%%%%%%%%%%%%%%%%%%%%%%%%%%%%%%%%%%%%
\clearpage
\newpage 
\appendix

 \begin{center}\Large{\textbf{\appendixname{\\APPO: Agentic Procedural Policy Optimization}}}\end{center}
% \section{Contents}
The Appendix of the paper is organized as:
\begin{itemize}
    \item Appendix \ref{appx:a}: We give the proof of theorems.
    \item Appendix \ref{appx:b}: We introduce datasets and baselines.
    \item Appendix \ref{appx:c}: We report full implementation details.
    \item Appendix \ref{appx:d}: We report alternative designs of the BS metric.
    \item Appendix \ref{appx:e}: We report studies of the branching budget.
    \item Appendix \ref{appx:f}: We report the prompts we used.
    \item Appendix \ref{appx:g}: We report the limitations.
    \item Appendix \ref{appx:h}: We report case studies.
    \item Appendix \ref{appx:i}: We report the algorithm of APPO. 
    \item Appendix \ref{appx:j}: We report the impact statement. 
    \item Appendix \ref{appx:k}: We report the declaration of LLM usage. 
\end{itemize}

\section{Mathematical Proof}
\label{appx:a}

\subsection{Proof of Theorem \ref{theo:1}} 
\label{proof:theorem1}

\textbf{Proof.}
Let $\mathcal{D}=\{D_1,\dots,D_K\}$ denote the candidate branching locations in one rollout, where each $D_i$ is a fine-grained decision point.
Let $n_i$ be the number of branches assigned to $D_i$, with total budget $\sum_{i=1}^K n_i = M$.
The baseline uniform allocation sets $n_i^{\mathrm{uniform}} = M/K$ for all $i$.
By the assumptions, APPO adopts the proportional branch allocation $n_i^{\mathrm{APPO}}$ (assigning more branches to higher-${\rm BS}_i$ locations); top-${\rm BS}$ selection in practice approximates this rule.
For each $D_i$, let $g_i$ denote the gradient estimator from its branch samples.
Assuming i.i.d.\ branch rewards at $D_i$, the total estimator variance decomposes as
\begin{equation}
\mathrm{Var}(g) = \sum_{i=1}^K \mathrm{Var}(g_i \mid D_i)
= \sum_{i=1}^K \frac{\sigma_i^2}{n_i},
\end{equation}
where $\sigma_i^2 := \mathrm{Var}(R \mid D_i)$ as in the theorem.  For i.i.d.\ branch rewards at $D_i$,
$\mathrm{Var}(g_i \mid D_i) = \sigma_i^2/n_i$.
By Lagrange multipliers (or Cauchy--Schwarz), the allocation minimizing
$\sum_i \sigma_i^2/n_i$ subject to $\sum_i n_i = M$ is
$n_i^* = M \sigma_i / \sum_j \sigma_j$.
Therefore, for any uniform allocation $n_i^{\mathrm{uniform}} = M/K$,
\[
\sum_{i=1}^K \frac{\sigma_i^2}{n_i^{\mathrm{APPO}}}
\le
\sum_{i=1}^K \frac{\sigma_i^2}{n_i^{\mathrm{uniform}}}
= \frac{K}{M}\sum_{i=1}^K \sigma_i^2,
\]
with strict inequality unless all $\sigma_i$ are equal.
Setting $\Delta_\Omega({\rm BS})
:= \mathrm{Var}(g_{\mathrm{base}}) - \mathrm{Var}(g_{\mathrm{APPO}}) \ge 0$
completes the proof.

\subsection{Proof of Theorem \ref{theo:2}}
\label{proof:theorem2}
\textbf{Proof.}
Write $\mathcal{J}(\pi)\equiv\eta(\pi)$. By the Performance Difference Lemma,
\begin{equation}\label{eq:pdl2}
\mathcal{J}(\pi_{\rm new})-\mathcal{J}(\pi_{\rm old})
=\frac{1}{1-\gamma}\,
\mathbb{E}_{s\sim\rho^{\pi_{\rm new}},\,a\sim\pi_{\rm new}}
\big[A^{\pi_{\rm old}}(s,a)\big].
\end{equation}
Define the APPO surrogate actually optimized (weighted advantage on the
branching mixture distribution $\rho^q$):
\begin{equation}
L_{\rm APPO}(\pi_{\rm new})
=\frac{1}{1-\gamma}\,
\mathbb{E}_{s\sim\rho^{q},\,a\sim\pi_{\rm new}}
\big[\omega(s)\,A^{\pi_{\rm old}}(s,a)\big],
\quad \omega(s)=1+b\,\hat{A}^{\rm fut}(s).
\end{equation}
Then:
\begin{equation}\label{eq:x}
\begin{split}
   & \mathcal{J}(\pi_{\rm new})-\mathcal{J}(\pi_{\rm old})-L_{\rm APPO}(\pi_{\rm new})
= \\
&\underbrace{\frac{1}{1-\gamma}\Big(
\mathbb{E}_{\rho^{\pi_{\rm new}},\pi_{\rm new}}-\mathbb{E}_{\rho^{q},\pi_{\rm new}}
\big)[A^{\pi_{\rm old}}]\Big]}_{\text{occupancy mismatch}}
+\underbrace{\frac{1}{1-\gamma}\,
\mathbb{E}_{\rho^{q},\pi_{\rm new}}
\big[(1-\omega(s))A^{\pi_{\rm old}}(s,a)\big]}_{\text{weighting mismatch}}.
\end{split} 
\end{equation}
Since $|A^{\pi_{\rm old}}(s,a)|\le r_{\max}/(1-\gamma)$ and
$\|\omega\|_\infty\le 1+b(1+\epsilon')$, the simulation lemma
(e.g., TRPO analysis~\cite{schulman2017proximal,agarwal2019reinforcement}) yields
\begin{equation}
\Big|\mathbb{E}_{\rho^{\pi_{\rm new}},\pi_{\rm new}}[A^{\pi_{\rm old}}]
-\mathbb{E}_{\rho^{q},\pi_{\rm new}}[A^{\pi_{\rm old}}]\Big|
\le \frac{2\gamma r_{\max}}{(1-\gamma)^2}\,
\mathbb{E}_{s\sim\rho^{\pi_{\rm new}}}
\mathrm{TV}\big(\pi_{\rm new}(\cdot|s),\pi_{\rm old}(\cdot|s)\big).
\end{equation}
Under $\mathrm{KL}(\pi_{\rm new}\|\pi_{\rm old})\le\epsilon$,
Pinsker's inequality bounds the TV term by $O(\sqrt{\epsilon})$;
absorbing policy and occupancy errors into a single constant $C$ gives the following equation:
\begin{equation}
\mathcal{J}(\pi_{\rm new})-\mathcal{J}(\pi_{\rm old})
\ge L_{\rm APPO}(\pi_{\rm new})
-\frac{C\epsilon}{(1-\gamma)^2}.
\end{equation} 
 
\hfill$\square$
\section{Datasets and Baselines.}\label{appx:b}
The datasets employed in our experiments are introduced as the following:

\noindent\textbf{AIME24.} AIME24 comprises the 30 problems from the 2024 American Invitational Mathematics Examination (AIME), split across two competition papers. Each problem demands an integer answer between 0 and 999, yet the reasoning paths required are far from trivial—contestants must navigate number theory, combinatorics, geometry, and algebra. Its small size makes it sensitive to variance, so performance is typically reported as average accuracy over multiple runs.

\noindent\textbf{AIME25.} AIME25 follows the same format as AIME24, providing another 30 competition-level problems released in February 2025. Because the problems post-date most training cutoffs, AIME25 serves as a relatively contamination-resistant probe of genuine mathematical reasoning. It has quickly become a standard checkpoint for evaluating frontier models on olympiad-style problem solving.

\noindent\textbf{MATH500.} MATH500~\cite{lightman2023let} is a 500-problem subset drawn from the broader MATH to span all seven subject categories and five difficulty levels of the original collection. Problems range from introductory algebra to competition-level precalculus, making the subset a compact yet representative testbed for mathematical reasoning. It is widely used as an in-distribution evaluation complement to harder olympiad benchmarks.

\noindent\textbf{MATH.} MATH~\cite{hendrycks2021measuring} contains 12,500 problems sourced from high school mathematics competitions, covering algebra, counting and probability, geometry, number theory, and precalculus, among others. Each problem is accompanied by a step-by-step solution, enabling process-level as well as answer-level evaluation. The dataset is notable for its difficulty gradient: even the easiest problems require multi-step symbolic reasoning, while the hardest rival olympiad-level challenges.

\noindent\textbf{GSM8K.} GSM8K is released by~\cite{cobbe2021training} at OpenAI and consists of 8,500 linguistically diverse word problems pitched at the elementary school level. Despite their apparent simplicity, the problems require chains of two to eight arithmetic steps, making them a reliable probe of basic multi-step reasoning. The dataset is split into 7,500 training and 1,000 test examples and remains one of the most widely cited benchmarks for evaluating arithmetic reasoning in language models.

\noindent\textbf{HotpotQA.} HotpotQA~\cite{yang2018hotpotqa} contains roughly 113,000 Wikipedia-based question–answer pairs that are explicitly designed to require reasoning over two supporting documents. A key design feature is the inclusion of sentence-level supporting-fact annotations, which allow evaluation of both answer correctness and the quality of the reasoning chains.

\noindent\textbf{2WikiMultihopQA.} 2WikiMultihopQA~\cite{ho2020constructing} constructs multi-hop questions by combining information from two Wikipedia articles, using structured Wikidata triples to guarantee that each question genuinely requires cross-document reasoning rather than being solvable from a single passage. The dataset provides explicit evidence chains alongside each question, enabling fine-grained evaluation of intermediate reasoning steps. 
 
\noindent\textbf{MuSiQue.} MuSiQue~\cite{trivedi2022musique} takes a bottom-up approach to constructing multi-hop questions: single-hop questions from existing datasets are composed into 2–4 hop chains via directed acyclic graphs, ensuring that each hop is genuinely necessary and that shortcuts are systematically eliminated. The dataset contains approximately 25,000 questions and is considered harder to game than earlier multi-hop benchmarks, where models could often answer correctly by attending to a single passage.

\noindent\textbf{Bamboogle.} Bamboogle~\cite{press2023measuring} is a small, hand-crafted collection of 125 two-hop factual questions, assembled specifically by filtering out any question that Google's search engine answers correctly. This adversarial construction criterion means the questions tend to require non-obvious compositional reasoning that surface-level retrieval struggles with. Despite its modest size, Bamboogle is frequently used as a stress test for retrieval-augmented generation systems.

\noindent\textbf{GAIA.} GAIA~\cite{mialon2023gaia} is a benchmark for General AI Assistants comprising 466 real-world questions across three difficulty levels. What distinguishes GAIA is its emphasis on practical, tool-assisted problem solving: questions may require web browsing, file parsing, multi-modal understanding, and multi-step planning in combination. Humans solve the benchmark with high accuracy, yet agents still struggle significantly, positioning GAIA as a meaningful frontier for agentic evaluation.

\noindent\textbf{HLE}. Humanity's Last Exam (HLE)~\cite{phan2025humanity} consists of 2,500 questions assembled by over 1,000 subject-matter experts to represent the frontier of human knowledge. Questions span mathematics, the natural sciences, humanities, and professional domains, with roughly 10\% requiring image comprehension. The benchmark was explicitly designed to resist saturation: at release, top models scored below 10\%, making it one of the most demanding closed-ended evaluations currently available.

\noindent\textbf{WebWalker.} WebWalker~\cite{wu2025webwalker} benchmarks LLMs on web traversal. The accompanying dataset, WebWalkerQA, contains 680 questions derived from real websites, where answering correctly requires an agent to plan a sequence of click actions across multiple pages. The benchmark highlights a gap between static retrieval and dynamic, navigation-based information seeking.

\noindent\textbf{XBench.} XBench~\cite{chen2025xbench} is a profession-aligned, dynamic evaluation suite introduced by Sequoia Capital to measure AI agent productivity in real-world occupational contexts. Rather than testing isolated capabilities, it presents tasks drawn from domains such as marketing, software engineering, and legal work, scored against expert-produced reference outputs. Its evergreen design is intended to resist benchmark saturation and track whether agent performance translates into genuine workplace utility.

The baselines compared in our experiments are shown in the following:

\noindent\textbf{GRPO.} GRPO~\cite{guo2025deepseek} is critic-free, which groups multiple responses per prompt and utilizing relative reward comparisons within these groups, it optimizes policies based on intra-group performance rather than absolute scalar values, thereby improving sample efficiency.

\noindent\textbf{Reinforce++.}  REINFORCE++~\cite{hu2025reinforce++} is an enhanced variant of the classical REINFORCE algorithm that incorporates key optimization techniques from PPO—including token-level KL penalties and reward normalization—while eliminating the need for a critic model. It achieves three primary objectives: simplicity, enhanced training stability, and reduced computational overhead, making it a practical and efficient baseline for RLHF-based LLM alignment.

\noindent\textbf{DAPO.} DAPO~\cite{yu2025dapo}  is an open-source, industrial-scale RL system for LLM post-training proposed by ByteDance. It introduces four key techniques—Clip-Higher, dynamic sampling, token-level policy gradient loss, and overlong reward shaping—to address entropy collapse and training instability commonly observed in GRPO-based long chain-of-thought training.

\noindent\textbf{GPPO.} GPPO~\cite{su2025klear}  addresses the gradient vanishing problem caused by hard token-level clipping in standard PPO-style training. By gently reintroducing bounded gradients from clipped tokens, GPPO enables finer-grained exploration control and more stable policy updates, serving as the foundation for the entropy-regularized extension CE-GPPO.

\noindent\textbf{CISPO.} CISPO~\cite{chen2025minimax} is a reinforcement learning algorithm proposed in the MiniMax-M1 project that clips importance-sampling weights at the sequence level rather than applying per-token update clipping. This design allows all tokens—including low-probability ones—to contribute to gradient updates, reducing variance and improving training stability in off-policy LLM fine-tuning scenarios.

\noindent\textbf{GIGPO.} GiGPO~\cite{feng2025group} is a critic-free RL algorithm designed for training long-horizon LLM agents, proposed by researchers at Nanyang Technological University. It introduces a two-level grouping structure that combines episode-level advantage estimation with step-level credit assignment, enabling fine-grained attribution of rewards to individual actions without requiring an additional value network.

\noindent\textbf{ARPO.} ARPO~\cite{dong2025agentic} is a reinforcement learning framework specifically designed for multi-turn, tool-augmented LLM agents. It addresses the entropy increase observed after tool interactions by introducing entropy-based adaptive sampling and fine-grained advantage attribution at the action level, enabling more effective exploration and stable policy optimization in agentic settings.

\noindent\textbf{Qwen3.} Qwen3~\cite{yang2025qwen3} is the latest generation of the Qwen large language model family, released in April 2025. It introduces a hybrid thinking mode that allows flexible switching between deliberate chain-of-thought reasoning and fast non-thinking inference, supports 119 languages, and was trained on a corpus exceeding 36 trillion tokens, achieving competitive performance across a broad range of benchmarks.

\noindent\textbf{DeepSeek-R1.} DeepSeek-R1~\cite{guo2025deepseek} is a large reasoning model developed by DeepSeek that incentivizes complex reasoning capabilities through large-scale reinforcement learning with verifiable rewards, without relying on supervised chain-of-thought data in its initial training stage. The resulting model exhibits emergent reasoning behaviors such as self-reflection and dynamic strategy adjustment, achieving performance competitive with OpenAI's o1 series on mathematical and scientific benchmarks.

\noindent\textbf{QwQ.} QwQ~\cite{team2024qwq} is an open-source reasoning model series developed by the Qwen team, first released in November 2024, designed to tackle problems requiring deep analytical thinking. The flagship QwQ-32B model demonstrates that a mid-sized model can achieve competitive performance against state-of-the-art reasoning systems such as DeepSeek-R1 and OpenAI o1-mini through extended chain-of-thought reasoning.

\noindent\textbf{GPT-4o.} GPT-4o~\cite{hurst2024gpt} is OpenAI's flagship multimodal generative model released in May 2024. It achieves significantly lower latency than its predecessors by natively handling cross-modal inputs without modality-specific preprocessing pipelines, establishing a new standard for real-time multimodal AI interaction.

\noindent\textbf{RAG.} RAG~\cite{lewis2020retrieval} enhances large language model outputs by dynamically retrieving relevant documents from an external knowledge base at inference time, thereby grounding generation in up-to-date and domain-specific information. By decoupling parametric knowledge stored in model weights from non-parametric knowledge retrieved on demand, RAG substantially reduces hallucination and improves factual accuracy without requiring full model retraining.

\noindent\textbf{Search-o1.} Search-o1~\cite{li2025search} augments large reasoning models with an agentic search workflow, enabling them to dynamically retrieve external knowledge when they encounter uncertain or knowledge-intensive steps during long chain-of-thought reasoning. By seamlessly interleaving retrieval with reasoning rather than treating search as a preprocessing step, Search-o1 improves both the reliability and factual grounding of complex multi-step inference.

\noindent\textbf{WebThinker.} WebThinker~\cite{li2025webthinker} is a deep research agent that empowers large reasoning models to autonomously search the web, navigate across multiple web pages, and synthesize retrieved information into structured research reports. It introduces a Deep Web Explorer module that tightly integrates web interaction with chain-of-thought reasoning, enabling end-to-end autonomous research on complex, knowledge-intensive tasks.

\noindent\textbf{ReAct.} ReAct~\cite{yao2022react} is a prompting paradigm that interleaves verbal reasoning traces with executable actions in large language models, enabling them to dynamically plan, retrieve information, and adjust their behavior based on environmental feedback. By synergizing chain-of-thought reasoning with tool use, ReAct substantially improves interpretability and task performance on knowledge-intensive decision-making benchmarks compared to reasoning-only or acting-only baselines.

\section{Implementation Details}\label{appx:c}
Our APPO follows a training pipeline largely consistent with ARPO. Specifically, we first train the backbone model to obtain fundamental tool-use capabilities using ToolStar's 54K SFT dataset together with an additional 0.8K samples drawn from the STILL dataset. For SFT training, we employ LLAMA-Factory~\cite{zheng2024llamafactory} with a learning rate of 7e-6, DeepSpeed ZeRO-3~\cite{rasley2020deepspeed}, and Flash-Attention2~\cite{dao2023flashattention}. The batch size is set to 128, the weight decay is 0.1, and the model is trained for 3 epochs. We adopt BF16 mixed precision and set the maximum input length to 4096 tokens.

For the RL stage, we implement APPO based on the VERL framework~\cite{sheng2025hybridflow}. In particular, all tool execution results are masked out from the loss computation to avoid introducing bias toward tool outputs. Accordingly, the loss is computed only over tokens corresponding to textual reasoning and tool calls. We use different configurations for DeepReasoning Tasks and Deep Search Tasks:

(i) DeepReasoning Tasks: For 7B models,  our default configuration adopts a total training batch size of 128, a PPO mini-batch size of 16, a global rollout size of 16, and an initial sampling size of 4. The maximum response length for each interaction is limited to 4096 tokens. For APPO rollouts, we set the entropy coefficient to 0.2, parameter $b$ to 0.5, and the threshold to 0.5. The decay factor gamma is set to $2^{-\frac{1}{32}}$. For clipping with coefficient $\epsilon$, in implementation we adopt a non-symmetric setting  [1,1.2], to encourage the positive exploration of the future value term.  To improve training stability, the KL divergence coefficient in GRPO is fixed at 0. The RL stage is carried out for 2 epochs on 8 NVIDIA H100 GPUs.

(ii) DeepSearch Tasks: For 8B models, we use the same configuration as the above, except that the maximum response length for each interaction is increased to 8192 tokens. For 14B models, the same hyperparameter settings are retained, while the experiments are also conducted on 16 NVIDIA H100 GPUs. Since the dataset contains only 1K samples, the RL stage is performed for 5 epochs.

Regarding the significance analysis, our approach adheres to established conventions by reporting dataset-level averages as well as pass@3 and pass@5 results. These experimental settings avoid concerns regarding statistical significance.

\begin{table}[t]
\caption{Analysis on branching configuration with Qwen2.5-7B-Instruct. The best results are in \textbf{bold}.}
\centering
\label{tab:appx} 
\renewcommand{\arraystretch}{1.1}   
\setlength\tabcolsep{6pt} 
\resizebox{1\columnwidth}{!}{
\begin{tabular}{lcccccc}
\hline
\textbf{Method} & \textbf{WebWalker} & \textbf{HotpotQA} & \textbf{2Wiki} & \textbf{Musique} & \textbf{Bamboogle} & \textbf{Average} \\ \hline
APPO $(N=1,   B=2,L=3,M=16)$ & 28.5 & {63.0} & {78.0} &  {29.8} &  {71.0} & 54.1 \\
APPO $(N=1,   B=4,L=2, M=16)$ & 28.5 & 63.8 & 78.8 & 30.2 & 71.3 & 54.5 \\
APPO $(N=3,   B=1,L=2,M=16)$ & \textbf{33.2} & \textbf{64.9} & \textbf{81.0} & \textbf{31.5} & \textbf{77.8} & \textbf{57.7} \\ \hline
\end{tabular}}\vspace{-0.4cm}
\end{table}

\begin{figure*}[t] 
    \centering
    \includegraphics[width=1\linewidth]{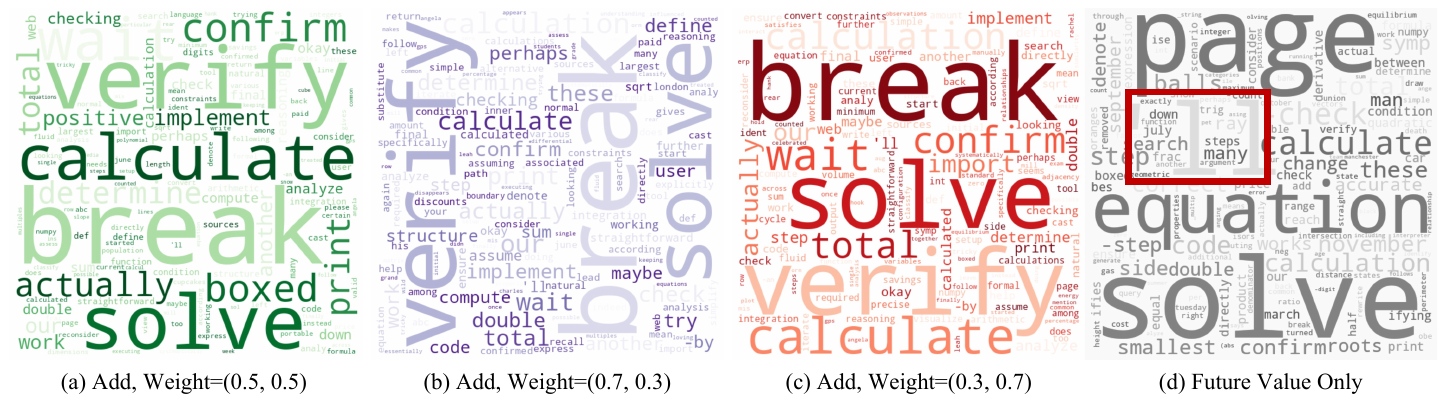} 
    \vspace{-0.4cm}
    \caption{WordCloud of tokens selected by alternative designs of the BS metric.} 
    \vspace{-0.5cm}
    \label{fig:altering}
\end{figure*} 

\section{Alternative Designs of the BS metric}\label{appx:d} 

In this section, we investigate different formulations of the BS metric, including additive combinations of normalized entropy and future value with varying weights, as well as the case utilizing future value alone. Figure \ref{fig:altering} presents the word clouds obtained under these four settings. We observe that the additive design successfully captures tokens significant to reasoning, such as ``calculate'', ``verify'', ``break'', and ``solve''. Interestingly, if the overall metric is disproportionately biased towards future value, the model captures special tokens like \texttt{``'ll''} (not very clear in the Figure \ref{fig:altering}.d. We enclose it in a box, just below the word ``page''). We attribute these tokens to positions where the model reaffirms existing conclusions; while they may have high influence on subsequent rollouts, they fail to reflect actual value for model training and fine-grained supervision.

\section{More Sensitivity Analysis of Key Hyper-parameters}\label{appx:e}
\paragraph{Studies of the number of the branching loop $L$.} In the main paper, our experiments are limited to a single-layer rollout tree, where all branching operations are applied only to the initial rollout. Consider a multi-layer rollout tree, i.e., the case where $L>1$, we have $M=B\cdot(N+1)^L$. The results of our sensitivity analysis are presented in Table \ref{tab:appx}.

We observed that: (i) When $N=1$, the overall performance of the model is relatively poor, showing even a slight decline compared to the case in the main paper where $L=1$ and $M$ is even smaller. For example, with $(N=1, B=2, L=3, M=16)$, the overall performance is approximately 54.1\%, which is about 1.1 points lower than the case $(N=2, B=3, L=1, M=8)$ reported in Table \ref{tab:budget}. We attribute this to the insufficient diversity of the initial rollout, which causes the model to be significantly affected by randomness, leading to performance fluctuations. (ii) When the initial rollout is kept the same, varying $B$ or $L$ does not have a noticeable impact on performance. We believe this is because the two settings largely share the prefix information of the rollouts, resulting in substantial overlap in the sampling distribution. These results once again confirm that our choice in the main text is an optimal trade-off.

% \paragraph{Studies of the Key Hyper-parameter $b$.}

\section{Prompts}\label{appx:f} 
Following ~\cite{dong2025agentic}, The prompt used in APPO is listed in the following:

\begin{tcolorbox}[
  colback=gray!10,     % 背景灰（想要透明就改成 colback=white 或去掉）
  colframe=black,      % 外框颜色
  boxrule=1pt,         % 线宽
  arc=2mm,             % 圆角半径
  left=4pt,right=4pt,top=4pt,bottom=4pt, % 内边距
] 

 \textit{You are a helpful assistant that can solve the given question step by step with the help of the wikipedia search tool and python interpreter tool.  
Given a question, you need to first think about the reasoning process in the mind and then provide the answer. 
During thinking, you can invoke the wikipedia search tool to search and python interpreter tool to calculate the math problem for fact information about specific topics if needed. 
The reasoning process and answer are enclosed within <think> </think> and <answer> </answer> tags respectively,  
and the search query and result are enclosed within <search> </search> and <result> </result> tags respectively.  
For example, <think> This is the reasoning process. </think> <search> search query here </search> <result> search result here </result>  
<think> This is the reasoning process. </think> <python> python code here </python> <result> python interpreter result here </result> 
<think> This is the reasoning process. </think> <answer> The final answer is  \boxed{answer here}  </answer>.  
In the last part of the answer, the final exact answer is enclosed within \boxed{} with latex format. }
 
\end{tcolorbox}

\section{Limitations}\label{appx:g} 
Although the proposed APPO introduces innovations in both how to branch and where to branch, our work still has the following limitations: (i) The splitting point selection method is validated solely through experiments, lacking theoretical guarantees that BS is an optimal branching criterion. However, fully quantifying the actual branching value of a specific point requires a more systematic framework design and an exploration of the intrinsic properties of LLMs. (ii) Following prior work, tools employed by our APPO are currently restricted to Search and Python, and have not been extended to other application-oriented tools.
Despite these factors, APPO demonstrates sufficient performance advantages across the broadest possible range of experimental settings.

\section{Case Study}\label{appx:h}
We provide two kinds of cases in this section: (i) the branching stage cases, where the initial rollout is wrong but turns correct by our branching selections; (ii) the inference cases of ARPO and APPO, where ARPO fails but solvable to our APPO.   

\section{Algorithm}\label{appx:i}
The algorithm of APPO is shown in Algorithm \ref{alg:appo}.
\DontPrintSemicolon
\SetKwInOut{Input}{Input}
\SetKwProg{Fn}{Function}{}{end}
\SetKwComment{tcp}{// }{}
\SetAlCapSkip{0.5em}
\SetAlgoNlRelativeSize{-1}
\SetAlFnt{\small}
\SetKw{KwTo}{to}
 
\begin{algorithm}[t]
\caption{APPO Training Pipeline}
\label{alg:appo}
\Input{policy $\pi_\theta$, reference policy $\pi_{\rm ref}$, toolset $T$, training set $\mathcal{D}$, rollout budget $M$, number of initial rollouts $N$, number of selected branching points $B$, PPO epochs $K_{ppo}$, clipping thresholds $\epsilon,\epsilon'$, procedural weight $b$, decay factor $\gamma$}

\For{each training step}{
    sample input $x \sim \mathcal{D}$\;
    set behavior policy $\pi_{\rm old} \leftarrow \pi_\theta$\;
    $\mathcal{T}_{init} \leftarrow \emptyset$\;

    \tcp{Initialization: generate initial rollouts}
    \For{$n=1$ \KwTo $N$}{
        generate a full rollout $\mathcal{H}_n \sim \pi_{\rm old}(\cdot \mid x;T)$ through agent-environment interaction\;
        add $\mathcal{H}_n$ to $\mathcal{T}_{init}$\;
    }

    \tcp{Mini-batch procedural branching and optimization}
    \For{$e=1$ \KwTo $K_{ppo}$}{
        $\mathcal{T}_{branch} \leftarrow \emptyset$\;

        \While{$|\mathcal{T}_{branch}| < M-N$}{
            uniformly sample one rollout $\mathcal{H}_n$ from the current rollout trees\;

            \For{each valid token position $i$ in $\mathcal{H}_n$}{
                compute token entropy $H_{n,i}$ by Eq.~(3)\;
                compute future value $\Omega_{n,i}$ by Eq.~(4)\;
                compute Branching Score ${\rm BS}_{n,i}$ by Eq.~(5)\;
            }

            select the top-$B$ tokens according to ${\rm BS}_{n,i}$ and denote them as $\mathcal{B}_n$\;

            \ForEach{$i \in \mathcal{B}_n$}{
                resample a continuation from prefix $\mathcal{H}_{n,<i}$ using the current policy $\pi_\theta$ to obtain a branch $\mathcal{H}_{n}^{new}$\;
                add $\mathcal{H}_{n}^{new}$ to $\mathcal{T}_{branch}$\;
                update the rollout tree with $\mathcal{H}_{n}^{new}$\;

                \If{$|\mathcal{T}_{branch}| = M-N$}{
                    \textbf{break}\;
                }
            }
        }

        \tcp{Dual-group advantage estimation}
        evaluate reward $R(\mathcal{H})$ for each rollout $\mathcal{H} \in \mathcal{T}_{init} \cup \mathcal{T}_{branch}$\;
        compute group-relative advantages separately for $\mathcal{T}_{init}$ and $\mathcal{T}_{branch}$ by Eq.~(6)\;

        \tcp{Future-aware procedural advantage}
        \For{each token position $i$ in each initial rollout $\mathcal{H}_n \in \mathcal{T}_{init}$}{
            compute $\hat{A}^{\rm fut}_{n,i}$ by Eq.~(7)\;
            set $\hat{A}_{n,i} \leftarrow \hat{A}^{\rm base}_{n,i}(1+b\cdot \hat{A}^{\rm fut}_{n,i})$\;
        }

        \tcp{Policy optimization on initial rollouts}
        optimize Eq.~(8) on $\mathcal{T}_{init}$ with KL regularization to $\pi_{\rm ref}$\;
        update $\theta$\;
    }
}
\end{algorithm}

\section{Impact Statement}\label{appx:j}
This paper introduces APPO, a novel Agent RL method designed to push the boundaries of where to branch and how to branch. The proposed APPO holds broad societal value, playing a significant role in domains such as search systems, AI-assisted healthcare and education. It is particularly worth noting that, as a work aimed at advancing agent capabilities, APPO not only delivers broader efficacy improvements but also catalyzes increased automated behaviors. Like any research in this field, we are fully committed to participating in regulatory compliance and safety governance for agents. Overall, our proposed APPO offers extensive value to both the research and application communities, fostering healthy and sustainable development in the field of agents.

\section{Declaration of LLM Usage}\label{appx:k}
LLMs are only for polishing the writing of this paper.

\begin{figure*}[h]
\centering
  
\begin{tcolorbox}[
    colback=gray!5!white, 
    colframe=brown!75!brown, 
    title=Case-1 from MATH,
    boxrule=0.3mm, 
    width=\linewidth,  
    arc=2mm, 
    auto outer arc=true
]
The 7th and 8th grades have enrollments of 520 and 650 respectively. The two grades have a total of 18 representatives on the Student Council. How many representatives should the 8th grade have if there is to be a fair representation of the two grades?
\end{tcolorbox}

\vspace{0.2cm}   
 
\begin{minipage}[t]{0.48\linewidth} 
    \begin{tcolorbox}[
        colback=gray!5!white, 
        colframe=red!75!brown, 
        title=Wrong Rollout, 
        boxrule=0.3mm, 
        width=\linewidth, 
        arc=2mm, 
        height=18cm,
        auto outer arc=true,
        top=2mm, bottom=2mm, left=2mm, right=2mm 
    ]
To determine the number of representatives the 8th grade should have, we need 
to ensure that the number of representatives is proportional to the enrollment 
of each grade. Here are the steps to solve the problem:

1. Calculate the total enrollment:
   Total enrollment = 520 + 650 = 1170

2. Determine the proportion of the 8th grade's enrollment to the total enrollment:
   Proportion of 8th grade = 650 / 1170

3. Simplify the proportion:
   To simplify 650/1170, we find the greatest common divisor (GCD) of 650 and 
   1170. The GCD is 50.
   
   650 ÷ 50 / 1170 ÷ 50 = 13 / 23.4   

4. Express the simplified proportion:
   13 / 23.4

5. Calculate the number of representatives for the 8th grade:
   Number of representatives = 13 / (13 + 23.4) × 18

6. Further calculation:
   13 / 36.4 × 18

7. Perform the multiplication:
   13 / 36.4 $\approx$ 0.35714
   0.35714 × 18 $\approx$ 6.42852

8. Round to the nearest whole number:
   Since the number must be a whole number, we round 6.42852 to 6.

\textcolor{red}{\textbf{Thus}}, the 8th grade should have \boxed{6} representatives.
\end{tcolorbox}
\end{minipage} 
\hspace*{\fill}
\begin{minipage}[t]{0.48\linewidth} 
    \begin{tcolorbox}[
        colback=gray!5!white, 
        colframe=blue!75!brown, 
        title=APPO Branching, 
        boxrule=0.3mm, 
        width=\linewidth,  
        arc=2mm, 
        height=18cm,
        auto outer arc=true,
        top=2mm, bottom=2mm, left=2mm, right=2mm 
    ]
To determine the number of representatives the 8th grade should have, we need 
to ensure that the number of representatives is proportional to the enrollment 
of each grade. Here are the steps to solve the problem:

1. Calculate the total enrollment:
   Total enrollment = 520 + 650 = 1170

2. Determine the proportion of the 8th grade's enrollment to the total enrollment:
   Proportion of 8th grade = 650 / 1170

3. Simplify the proportion:
   To simplify 650/1170, we find the greatest common divisor (GCD) of 650 and 
   1170. The GCD is 50.
   
   650 ÷ 50 / 1170 ÷ 50 = 13 / 23.4   

4. Express the simplified proportion:
   13 / 23.4

5. Calculate the number of representatives for the 8th grade:
   Number of representatives = 13 / (13 + 23.4) × 18

6. Further calculation:
   13 / 36.4 × 18

7. Perform the multiplication:
   13 / 36.4 $\approx$ 0.35714
   0.35714 × 18 $\approx$ 6.42852

8. Round to the nearest whole number:
   Since the number must be a whole number, we round 6.42852 to 6.

\textcolor{red}{\textbf{----------Branching----------}}

Wait, let me reconsider. The GCD of 650 and 1170 should be recalculated.

650 = 2 × 5² × 13
1170 = 2 × 3² × 5 × 13

GCD(650, 1170) = 2 × 5 × 13 = 130

So: 650 / 1170 = 650÷130 / 1170÷130 = 5 / 9

Number of 8th grade representatives = (5/9) × 18 = 10

Thus, the 8th grade should have \boxed{10} representatives.
\end{tcolorbox}
\end{minipage}

\end{figure*}

\begin{figure*}[h]
\centering
 
% --- 1. Query Box ---
\begin{tcolorbox}[
    colback=gray!5!white, 
    colframe=brown!75!brown, 
    title=Case-2 from Musique, 
    boxrule=0.3mm, 
    width=\linewidth,  
    arc=2mm, 
    auto outer arc=true
]
What does the acronym of the organization Danish Football Union is part of stand for?
\end{tcolorbox}

\vspace{0.2cm}   

\begin{minipage}[t]{0.48\linewidth}  
    \begin{tcolorbox}[
        colback=gray!5!white, 
        colframe=red!75!brown, 
        title=Wrong Rollout, 
        boxrule=0.3mm, 
        width=\linewidth, 
        arc=2mm, 
        height=18cm,
        auto outer arc=true,
        top=2mm, bottom=2mm, left=2mm, right=2mm  
    ]
    To provide the accurate answer, I need to first identify the organization  Danish Football Union is a part of and then find out the acronym associated 
    with that organization.

    \textcolor{mydarkred}{\textbf{<search>}}
    
    search query: What is Danish Football Union organization?
    
    \textcolor{mydarkred}{\textbf{</search>}}
    
    \textbf{<result>}
    
    \textbf{Five pages omitted}.

    \textbf{</result>}

    The search result indicates that the Danish Football Union (DBU) is a member 
    of both FIFA and UEFA. Now, I will find the acronym for UEFA to provide the answer.

    \textcolor{mydarkred}{\textbf{<search>}}
    
    search query: What is UEFA acronym?
    
    \textcolor{mydarkred}{\textbf{</search>}}
    
    \textbf{<result>}
    
    \textbf{Eight pages omitted}.

    \textbf{</result>}

    \textcolor{red}{\textbf{Therefore}}, the acronym of the organization that the Danish Football Union is 
    part of stands for Union of European Football Associations.
    \end{tcolorbox}
\end{minipage} 
\hfill 
\begin{minipage}[t]{0.48\linewidth} 
    \begin{tcolorbox}[
        colback=gray!5!white, 
        colframe=blue!75!brown, 
        title=APPO Branching, 
        boxrule=0.3mm, 
        width=\linewidth,  
        arc=2mm, 
        height=18cm,
        auto outer arc=true,
        top=2mm, bottom=2mm, left=2mm, right=2mm 
    ]
    To provide the accurate answer, I need to first identify the organization 
    Danish Football Union is a part of and then find out the acronym associated 
    with that organization.

    \textcolor{mydarkred}{\textbf{<search>}}
    
    search query: What is Danish Football Union organization?
    
    \textcolor{mydarkred}{\textbf{</search>}}
    
    \textbf{<result>}
    
    \textbf{Five pages omitted}.

    \textbf{</result>}

    The search result indicates that the Danish Football Union (DBU) is a member 
    of both FIFA and UEFA. Now, I will find the acronym for UEFA to provide the answer.

    \textcolor{mydarkred}{\textbf{<search>}}
    
    search query: What is UEFA acronym?
    
    \textcolor{mydarkred}{\textbf{</search>}}
    
    \textbf{<result>}
    
    \textbf{Eight pages omitted}.

    \textbf{</result>}

    \textcolor{red}{\textbf{----------Branching----------}}
    
    The search result shows that the Danish Football Union is a member of both 
    FIFA and UEFA. The question asks for "the organization" (singular), and since 
    FIFA is the global governing body of football while UEFA is only the European 
    body, the more fundamental organization is FIFA.

    \textcolor{mydarkred}{\textbf{<search>}}
    
    search query: What is FIFA acronym?
    
    \textcolor{mydarkred}{\textbf{</search>}}
    
    \textbf{<result>}
    
    \textbf{Nine pages omitted}.

    \textbf{</result>}

    Therefore, the acronym FIFA stands for International Federation of 
    Association Football.
    \end{tcolorbox}
\end{minipage}

\end{figure*}

\begin{figure*}[h]
\centering
 
\begin{tcolorbox}[
    colback=gray!5!white, 
    colframe=brown!75!brown, 
    title=Case-3 (Tool-Star RL Dataset): when the composer of the film Waqt (1965) was born?, 
    boxrule=0.3mm, 
    width=\linewidth,  
    arc=2mm, 
    auto outer arc=true
] 

<think> 

To find out when the composer of the film Waqt (1965) was born, we need to gather relevant information through Wikipedia searches and identify the necessary details. </think>

\textcolor{mydarkred}{\textbf{<search>}}search query: Who composed music for Waq 1965 film? \textcolor{mydarkred}{\textbf{</search>}}

<result>

Page 1: Lyrics and video of songs from Movie / Album : Waqt (1965); Music by: Ravi; Singer (s): Asha Bhosle, Mahendra Kapoor, Manna Dey, Mohammed Rafi; having star cast: Sunil Dutt, Sadhana, Raaj Kumar, …
Page 2: Waqt songs are composed by Ravi Shankar Sharma (Ravi), while Sahir Ludhianvi wrote its lyrics. Check out Waqt songs list with lyrics and music videos below.
Page 3: Ravi (music director) — Ravi Shankar Sharma, better known by his first name Ravi, is an Indian music director, who has composed music for several Hindi and Malayalam films.
Page 4: These included story writer Akhtar Mirza, who was credited as F.A. Mirza, dialogue writer Akhtar-Ul-Iman, and editor Pran Mehra. Album details and songs for Waqt.
Page 5: The film contains the Hindi songs "Hum Jab Simat Ke", sung by Mahendra Kapoor and Asha Bhosle, "Waqt se din aur raat", sung by Mohammed Rafi, "Aage Bhi Jaane Na Tu", sung by Asha Bhosle and …
Page 6: Listen to Waqt (Original Motion Picture Soundtrack) by Ravi, Sahir Ludhianvi \& Traditional on Apple Music. 1965. 21 Songs. Duration: 1 hour 35 minutes. </result> 

<think> 

Having obtained the relevant information, it is clear that Ravi composed music for this film. Now, to ascertain his date of birth, more focused searches would be necessary to obtain accurate information related to him.</think>

\textcolor{mydarkred}{\textbf{<search>}} 

search query: When was Ravi (music director) born?  \textcolor{mydarkred}{\textbf{</search>}}

<result>

Page 1: Ravi was born on 3 March 1926 in Delhi, British India. He was a music artist and composer, known for Gharana (1961), Dilwale Dulhania Le Jayenge (1995) and Khandan (1965).
Page 2: Ravi Shankar Sharma (3 March 1926 – 7 March 2012), often referred to mononymously as Ravi, was an Indian music director, who had composed music for several Hindi and Malayalam films.</result>  

<think> 

Based on the search results found so far, it looks like Ravi (full name Ravi Shankar Sharma) has a birth date which seems to repeat often, but it appears consistent across multiple relevant sources namely March 3, 1926. </think>

\textcolor{mydarkblue}{\textbf{<python>} } 

print("Ravi's complete name is Ravi Shankar Sharma. His birthday is on 3rd March 1926.") \textcolor{mydarkblue}{\textbf{</python>} } 

<result> 

Ravi's complete name is Ravi Shankar Sharma. His birthday is on 3rd March 1926. </result> 

<think> 

Upon confirming the date of birth via the reliable resource, Ravi was born on March 3, 1926 which will correctly identify the composer’s date of birth without any further steps.
</think>

\# Final confirmation
composer\_birth\_date = "March 3, 1926"

\textcolor{mydarkblue}{\textbf{<python>} }

print(f"The composer of the film 'Waqt (1965)', Ravi was born on {composer\_birth\_date}.")

\textcolor{mydarkblue}{\textbf{</python>} }, 

ground\_truth: 3 March 1926
f1\_score: 1.0, 

answer: March 3, 1926, ground\_truth: 3 March 1926
\end{tcolorbox} 

\end{figure*}

\begin{figure*}[h]
\centering
 
\begin{tcolorbox}[
    colback=gray!5!white, 
    colframe=brown!75!brown, 
    title=Case-4 (Tool-Star RL Dataset): what award the director of "The River's End (Film)" might have earned?, 
    boxrule=0.3mm, 
    width=\linewidth,  
    arc=2mm, 
    auto outer arc=true
] 

<think> To find out what award the director of "The River's End (Film)" might have earned, we will start by identifying the director's name from searching on Wikipedia, then proceed further to see if any awards related to him/her can be identified from provided context or additional searches.</think>

<search> search query: Who directed the film 'The River's End'?</search>

<result>
Page 1: Warner Bros. Pictures produced River's End, based on the same source in 1930; Michael Curtiz directed, and Charles Bickford and Evelyn Knapp starred (see AFI Catalog of Feature Films, 1921 …
Page 2: Where to watch JustWatch The River's End 1920 Directed by Victor Heerman, Marshall Neilan

\textbf{six pages omitted.}</result>  

<think> 

It appears there may have slight confusions between different films titled "The River's End". We'll assume that it refers to 'The River's End' directed by Ray Enright which seems consistent across multiple references and is more modern context than others. Now let me try finding director Ray Enright’s awards.</think>  

<search> 

search query: What award did Ray Enright receive as a director?</search> <result>
Page 1: Ray Enright (25 March 1896 – 3 April 1965) was an American film director. He directed 73 films between 1927 and 1953. He was born in Anderson, Indiana and died in Hollywood, California from a heart …
Page 2: In 1926 he started work for Warner Brothers as a cutter and two years later was made a director. The first film he directed was Tracked by the Police (1927), starring Rin Tin Tin.
Page 3: Ray Enright was a director and writer who was born in 1896 in United States and died in 1965 known for Montana, The Spoilers, Gung Ho!, South of St. Louis, Kansas Raiders, Coroner Creek, Return of the …
Page 4: Enright directed Humphrey Bogart in 1941's "The Wagons Roll at Night"; the same year, his film "Teddy The Rough Rider" won an Academy Award for Best Short Subject.
Page 5: Enright, Ray 1 reference stated in Czech National Authority Database mub20191040662 section Nationale Thesaurus voor Auteursnamen ID 256538646 1 reference stated in Virtual International …
Page 6: Note: Names and film titles in the Academy Awards Database are derived from film credits and other documentation submitted to the Academy at the time of nomination or award. Changes to name …
Page 7: Ray Enright American film director / From Wikipedia, the free encyclopedia Ray Enright (March 25, 1896 – April 3, 1965) was an American film director. He directed 73 films between 1927–53, many of them …
</result>  

<think> 

From my search results, it mentions that Ray Enright's film "Teddy The Rough Rider" (which appears to be a short subject film, as indicated by context of award categories) won an Academy Award for Best Short Subject in 1941. Hence, the award should be identified accordingly.</think>

<search> 

search query: Which award did Ray Enright win for "Teddy the Rough Rider"?</search> 

<result>

Page 1: Teddy, the Rough Rider is a 1940 American short drama film directed by Ray Enright. It won an Oscar at the 13th Academy Awards for Best Short Subject (Two-Reel).
Page 2: Enright directed Humphrey Bogart in 1941's "The Wagons Roll at Night"; the same year, his film "Teddy The Rough Rider" won an Academy Award for Best Short Subject.

\textbf{four pages omitted.}

</result>  

<think> 

The information clearly outlines that "Teddy the Rough Rider", directed by Ray Enright, won the Academy Award for Best Short Subject (Two-Reel) at the 13th Academy Awards.</think>  

<python> 
\# Confirm the exact award name
print("The award for 'Teddy the Rough Rider' is Best Short Subject.")
print("Specifically, the category was Best Short Subject (Two-Reel) at the 13th Academy Awards.")

</python>
 
f1\_score: 1.0,

answer: Best Short Subject, ground\_truth: Best Short Subject
 
\end{tcolorbox} 

\end{figure*}

\end{document}